\definecolor{lime}{HTML}{A6CE39}
\DeclareRobustCommand{\orcidicon}{%
	\begin{tikzpicture}
	\draw[lime, fill=lime] (0,0)
	circle [radius=0.16]
	node[white] {{\fontfamily{qag}\selectfont \tiny ID}};    \draw[white, fill=white] (-0.0625,0.095)
	circle [radius=0.007];    \end{tikzpicture}
	\hspace{-2mm}}
\xdef\csname orcid\x\endcsname{\noexpand\href{https://orcid.org/\csname orcidauthor\x\endcsname}{\noexpand\orcidicon}}}
\newcommand{\etal}{\textit{et~al.}}
\newcommand{\ie}{\textit{i.e.}}
\newcommand{\eg}{\textit{e.g.}}
\newtheorem{prop}{Proposition}
\newtheorem{rmrk}{Remark}
\numberwithin{equation}{section}
\begin{document}

\title{Understanding Adversarial Robustness from Feature Maps of Convolutional Layers}

\author{
        Cong~Xu\orcidA,
		Wei~Zhang\orcidB,
		Jun~Wang\orcidC,
        Min~Yang\orcidD
\thanks{
	This research is partially supported by National Natural Science Foundation of China (11771257) and Natural Science Foundation of Shandong Province (ZR2021MA010).
	(\emph{Corresponding author: Min Yang.})
}
\thanks{
	Cong Xu is with the School of Mathematics and Information Sciences,
	Yantai University, Yantai 264005, China, and with the School of Computer Science and Technology,
	East China Normal University, Shanghai, China. (email: congxueric@gmail.com)
}
\thanks{
	Wei Zhang and Jun Wang are with the School of Computer Science and Technology,
	East China Normal University, Shanghai, China. (email: zhangwei.thu2011@gmail.com, wongjun@gmail.com)
}
\thanks{
	Min Yang is with the School of Mathematics and Information Sciences,
	Yantai University, Yantai 264005, China. (email: yang@ytu.edu.cn)
}
}

\maketitle
		
\begin{abstract}
    The adversarial robustness of a neural network mainly relies on two factors: model capacity and anti-perturbation ability.
    In this paper, we study the anti-perturbation ability of the network from the feature maps of convolutional layers.
    Our theoretical analysis discovers that larger convolutional feature maps before average pooling can contribute to better resistance to perturbations,
    but the conclusion is not true for max pooling.
    It brings new inspiration to the design of robust neural networks and urges us to apply these findings to improve existing architectures.
    The proposed modifications are very simple and only require upsampling the inputs or slightly modifying the stride configurations of downsampling operators.
    We verify our approaches on several benchmark neural network architectures, including AlexNet, VGG, RestNet18, and PreActResNet18.
    Non-trivial improvements in terms of both natural accuracy and adversarial robustness can be achieved under various attack and defense mechanisms.
	The code is available at \url{https://github.com/MTandHJ/rcm}.
\end{abstract}		

\begin{IEEEkeywords}
	adversarial robustness, anti-perturbation ability, convolutional layer, feature maps, pooling.
\end{IEEEkeywords}

\section{Introduction}

\IEEEPARstart{A}{lthough} deep neural networks (DNNs) have achieved compelling performance on many challenging learning tasks \cite{wu2023aggn,wu2023kdpar},
some work \cite{goodfellow2015,szegedy2013} found that they are vulnerable to artificially crafted adversarial perturbations.
Imposing human-imperceptible perturbations on clean samples could deceive networks and cause incorrect classification.
As a result, the vulnerability of DNNs impedes their deployment, especially in security-critical applications.

In order to evaluate the robustness of networks,
a series of attack methods \cite{goodfellow2015,moosavi2016deepfool,madry2018pgd,carlini2017cwl2,croce2020aa} have been developed.
There is a consensus among them that,
standardly trained DNNs have little robustness against human-imperceptible perturbations.
To alleviate this problem, numerous defense methods \cite{naseer2020ssp,wong2020fgsmrs,xu2022tnnls,yang2021ddrrn,zhai2022tnnls}
have been proposed to improve the robustness of DNNs.
In particular, adversarial training~\cite{madry2018pgd,zhang2019trades} enjoys superior performance
by enforcing DNNs to correctly classify not only clean samples (for natural accuracy),
but also their artificially perturbed counterparts (for adversarial robustness).

\begin{figure}
	\centering
	\centering
	\scalebox{.85}{\includegraphics{./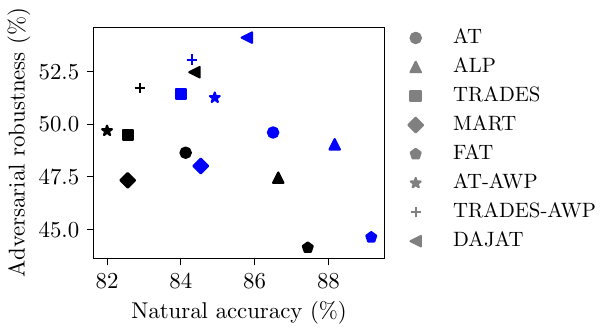}}
	\caption{
		 Natural accuracy  (\%) and adversarial robustness (\%) on CIFAR-10.
		 Different defense mechanisms are applied to the \textbf{baseline}  and \textcolor{blue}{improved} ResNet18.
	}
	\label{fig-basicComp}
\end{figure}

We argue that the robustness of a neural network mainly relies on two factors:
one is the model capacity to achieve high natural accuracy, and the other is its anti-perturbation ability to make consistent predictions about perturbed counterparts.
Usually larger neural networks tend to have better model capacity~\cite{cao2019},
which is why most robust results are based on large network architectures  \cite{gowal2020wa,madry2018pgd,wu2021wider}.
However, a larger network does not necessarily mean a better anti-perturbation ability.
On the contrary,
a larger network is likely to have a larger Lipschitz constant (in which case it is sensitive to the imperceptible perturbations),
which in turn leads to lower resistance to perturbations \cite{huang2021wideresnetR,wu2021wider}.
Noticeably, a peculiar phenomenon often observed in adversarial robustness literature is that when the natural accuracy of a model increases,
the corresponding robustness decreases,
which also indicates that the anti-perturbation ability of the network is not consistent with its model capacity.

As an essential component of most Convolutional Neural Netwoks (CNNs), the convolutional part has a key impact on the final performance.
Typically, this part maps an image into convolutional feature maps and then a pooling operator (\eg, average or max pooling) is used to connect them to the subsequent classifier.
This paper aims to analyze the anti-perturbation ability of CNNs from the perspective of convolutional feature maps.
Our theoretical analysis reveals that average pooling in conjunction with larger convolutional feature maps helps to increase anti-perturbation ability,
but the conclusion is not true for max pooling.

Based on these theoretical understandings, we present two simple but effective ways to improve existing CNNs.
The first straightforward approach is to upsample the input such that subsequent feature maps are scaled up.
The other is to modify sliding stride configurations of downsampling operators.
Both are orthogonal to existing defense mechanisms,
from the simplest AT \cite{madry2018pgd} and TRADES \cite{zhang2019trades} to the state-of-the-art AWP~\cite{wu2020awp} and DAJAT \cite{addepalli2020dajat}.
With the proposed modifications,
we achieve impressive robustness improvements for benchmark CNNs such as  AlexNet \cite{krizhevsky2012alex}, VGG16 \cite{simonyan2015vgg}, RestNet18 \cite{he2016resnet}, and PreActResNet18 \cite{he2016preactresnet}.
Besides, our further attempts at the Transformer architecture \cite{dosovitskiy2021vits} show that the approach is fairly effective and generalizable.
Moreover, in addition to adversarial robustness, we also observe a surprising enhancement of natural accuracy on the modified networks.
It indicates that the proposed methods not only help to strengthen the anti-perturbation ability of neural networks, but can also boost the model capacity.
We believe that the study of the paper will lead to a new perspective to understand the adversarial robustness of neural networks.

Our major contributions can be summarized as follows:
\begin{itemize}
	\item
	We theoretically reveal that the anti-perturbation ability of neural networks is closely related to the feature maps of convolutional layers.
	Properly enlarging the dimensions of convolutional features before average pooling can improve robustness,
    whereas the opposite holds true for max pooling.

	\item
    Two effective ways with slight modifications on convolutional layers are presented to improve existing CNNs.
	We carefully compare the resulting model capacity, anti-perturbation ability, and computational efficiency.
    Significant improvements are observed in terms of both robustness and natural accuracy.

	\item
	Previous studies usually improve adversarial robustness at the cost of natural accuracy.
	It is because the architectures they rely on have less consideration concerning the anti-perturbation ability.
    The research expands the understanding of anti-perturbation ability and brings in new insights for developing robust neural networks.

\end{itemize}

\section{Related Works}

\begin{table}[htb]
	\caption{
			Overview of related works.
	}
	\label{table-related-works}
	\centering
	\scalebox{1}{
	\begin{tabular}{cc}
			\toprule
			& Methods \\
			\midrule
			Adversarial training & \cite{madry2018pgd,zhang2019trades,wang2020mart,zhang2020fat,gowal2020wa,pang2020bag,zhai2022tnnls,xu2022tnnls,kanai2023relation} \\
			Certified robustness & \cite{cohen2019defense,salman2019smooth,liu2021provably} \\
			Input manipulation & \cite{rebuffi2021fix,lee2023booster,wang2023diffusion} \\
			Architectural design  & \cite{wu2021wider,huang2021wideresnetR,dai2021pssilu,shao2021robustofvit,zecchin2023emsemble,peng2023rwide} \\
			\bottomrule
	\end{tabular}}
\end{table}

In this section, we briefly review the related works on adversarial robustness.
Table \ref{table-related-works} presents an overview of defense methods.

\subsection{Robust learning strategies}

Numerous robust learning algorithms have been developed from different perspectives to help neural networks achieve robust feature representations.
For example, Cohen \etal~\cite{cohen2019defense} and Salman \etal~\cite{salman2019smooth} have studied the randomized smoothing technique
that can transform any base classifier into a new smoothed classifier with certifiable robustness.
Pang~\etal~\cite{pang2020mmc} exploited a Max-Mahalanobis center loss to force the model to learn compact features.
Adversarial training \cite{madry2018pgd} is the most successful robust learning technique,
which aims to dynamically fit the underlying distribution of adversarial samples by a min-max optimization objective.
Specifically, TRADES~\cite{zhang2019trades} and MART~\cite{wang2020mart} contribute to a good trade-off between natural accuracy and adversarial robustness.
FAT \cite{zhang2020fat} minimizes a loss over friendly adversarial samples, thereby enjoying superior natural accuracy.
AIB \cite{zhai2022tnnls} and InfoAT \cite{xu2022tnnls} improve adversarial training through the lens of information bottleneck.
Nevertheless, adversarial training is known to suffer from over-fitting \cite{rice2020overfitting}.
Therefore, an early-stopping learning schedule \cite{gowal2020wa,pang2020bag} is needed.
In addition, some practical techniques such as adversarial weight perturbation (AWP) \cite{wu2020awp}  and weight averaging \cite{gowal2020wa}
can help circumvent the local optima.

With the development of robust feature learning methods,
the progress that can be made becomes more and more limited.
Some works turn to input manipulation \cite{lee2023booster},
especially data augmentation \cite{rebuffi2021fix} and data generalization \cite{wang2023diffusion} to account for some rare cases.
However, these efforts still remain lacking in the understanding of the architecture itself.

\subsection{Robust network architectures}

The robustness of a neural network mainly relies on two factors: model capacity and anti-perturbation ability.
Naturally, one of promising directions is to promote anti-perturbation ability by using certain large networks.
Unfortunately, deeper or wider networks are likely to have larger Lipschitz constants,
which leads to lower resistance to perturbations \cite{wu2021wider,huang2021wideresnetR}.
As a result, for the widely-used WideResNet~\cite{zagoruyko2016wideresnet},
reducing its depth and width at the last stage in turn increases adversarial robustness~\cite{huang2021wideresnetR}.

As such, anti-perturbation ability is of particular value for adversarial robustness.
Recently, Dai \etal \cite{dai2021pssilu} have studied the learnable parametric activation functions
and the developed PSSiLU can significantly increase robustness when extra training samples are available.
Shao \etal \cite{shao2021robustofvit} have showed that vanilla ViTs \cite{dosovitskiy2021vits} can learn more generalizable features
and thus has superior robustness against adversarial perturbations.
Peng \etal \cite{peng2023rwide} developed an `optimal' architecture by exhaustive searching over different modules,
which achieves the state-of-the-art performance.

Although the aforementioned studies have made impressive robustness improvements in exploring network architectures,
the research on designing robust network architectures is far from enough.
These modifications still lack theoretical guarantees of effectiveness and are therefore difficult to apply across different scenarios.
Different from the perspective of existing studies,
this paper investigates the anti-perturbation ability of neural networks through convolutional feature maps.
The proposed simple modifications can be used to improve most of well-established architectures.
We anticipate that this work will encourage further efforts to understand and develop robust architectures.

\section{Anti-Perturbation Ability of Convolutional Feature Maps}

In this section, we study the anti-perturbation ability of the convolutional feature maps after pooling.
Before delving into the theoretical analysis, we first reveal the vital role of anti-perturbation ability for robustness.

\subsection{Preliminaries}

A neural network $ f(\cdot) $ is robust around a clean sample $x$ with the label $y $,
if the following two conditions are satisfied.
First, it recognizes the sample correctly, \ie,
\begin{align}
	\label{eq-correct}
	 f(x) = y.
\end{align}
Second, it predicts its perturbed counterparts consistently within a given perturbation budget $\epsilon$, \ie,
\begin{align}
	\label{eq-stable}
	f(x + \delta) = f(x), \quad \forall \: \|\delta\|_{\infty} \le \epsilon.
\end{align}

For the entire dataset,
the proportion of correctly identified clean samples is exactly the natural accuracy,
which reflects the model capacity of the neural network,
while the proportion of consistently classified perturbed samples empirically reflects the anti-perturbation ability.

Normally, the anti-perturbation ability can be estimated by the following formula:
\begin{align}
	\|f(x) - f(x')\|_{\infty} \le L \|x - x' \|_{\infty},
\label{Lip}
\end{align}
where $ L $ is a Lipschitz constant.
A small value indicates the high anti-perturbation ability of the network.

\begin{rmrk}
The adversarial robustness of a network is actually determined by its model capacity Eq. \eqref{eq-correct} and anti-perturbation ability Eq. \eqref{eq-stable}.
Larger neural networks usually possess better model capacity,
which consequently raises the upper limit of robustness.
However, as pointed out by recent studies \cite{huang2021wideresnetR,wu2021wider},
large neural networks may suffer from a degradation in perturbation resistance due to potentially large Lipschitz constants.
\end{rmrk}

In the following section,
we mainly focus on the anti-perturbation ability of the network from the perspective of the convolutional feature maps.

\subsection{Connection between anti-perturbation ability and convolutional feature maps}
\label{section-theory}

A typical deep convolutional network $ f(\cdot) $ for classification can be depicted as
\begin{align*}
   f(x) = h \circ \mathcal{P} \circ g(x).
\end{align*}
The convolutional encoder $g$ maps the image $x$ into the feature maps $g(x) \in \mathbb{R}^{C \times H \times W}$,
in which $C$ is the number of channels and the sizes $H, W$ respectively denote the height and width.
Then, the classifier $h$ predicts the category of the image based on the pooled features,
where the pooling operator $\mathcal{P}$ bridges the two in a \textit{channel-wise} manner.
To achieve high classification accuracy, the pooled features should be discriminative,
which is the main concern of the convolutional encoder.
In other words, the anti-perturbation ability of $\mathcal{P} \circ g$ largely determines the robustness of the entire model.
Next, we will study the property of the encoder in detail.

Let $ \delta $ denote the \textit{random} noise added to the image $x$ and
\begin{align*}
\Delta = g(x+\delta)-g(x) \in \mathbb{R}^{C \times H \times W}
\end{align*}
be the difference between the perturbed and the clean ones.
The pooled features can be given by
\begin{align*}
\mathcal{P}(\Delta) = \mathcal{P} \circ g(x+\delta)- \mathcal{P} \circ g(x) \in \mathbb{R}^C.
\end{align*}
For the convenience of discussion, we assume that the number of channels $ C=1 $; however,
the conclusions below can be extended to the general case since both average and max pooling here are channel-wise operations.

Let $ \gamma $ denote the threshold of interest.
The following inequality measures the anti-perturbation ability of  $ \mathcal{P} \circ g $ from a probabilistic point of view:
\begin{align}
		\mathbb{P}(|\mathcal{P}(\Delta)| \ge \gamma) \le p,
\label{staIneq}
\end{align}
where  $ 0\leq p \leq 1 $ is a probability that signifies the risk of the magnitude $|\mathcal{P}(\Delta)|$ exceeding the threshold $\gamma$.
For a given $p$ and a perturbation $\delta $ from a certain distribution,
the smaller $ \gamma $ is,
the better the anti-perturbation stability is.
The proposition below formally gives two sufficient conditions for Eq. \eqref{staIneq} to hold,
one based on average pooling and the other based on max pooling.

\begin{prop}
	\label{proposition-stability}
	For any sample $ x $, let $ g(x) \in \mathbb{R}^{H \times W}$ denote the corresponding convolutional feature map.
    Assume that the random perturbation $\delta$ follows a probability distribution $\mathcal{D}$ such that
    the elements of $ \Delta(\delta) = g(x+\delta)-g(x) $ are independent from each other
    and $\mathbb{E}_{\delta} [\Delta_{ij}(\delta)] = 0$ for all $ 1 \le i \le H, 1 \le j \le W$.
	Let
	 \begin{align*}
		 a = \min_{
			\substack{
				i, j \\
			  \delta \sim \mathcal{D} \\
			}
		 } \: \Delta_{ij}(\delta), \quad
		 b = \max_{
			\substack{
				i, j \\
			\delta \sim \mathcal{D} \\
			}
		 } \: \Delta_{ij}(\delta).
	 \end{align*}
	Then Eq. \eqref{staIneq} holds true when $\mathcal{P} $ is average pooling and
	\begin{align}
		\label{eq-average-pooling}
		2\exp \bigg( -\frac{2HW \gamma^2}{(b-a)^2} \bigg) \le p,
	\end{align}
    or when $\mathcal{P} $ is max pooling  and
	\begin{align}
		\label{eq-max-pooling}
		\frac{(b - a)\sqrt{\log\sqrt{2HW}}}{\gamma} \le p.
	\end{align}
\end{prop}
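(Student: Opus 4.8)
The plan is to handle the two pooling operators separately, since each reduces to a different classical concentration tool once the pooled difference is rewritten through $d(\delta)$. Throughout I use the hypotheses that the $HW$ entries $d_{ij}(\delta)$ are independent, mean-zero, and confined to $[a,b]$, so each has range $b-a$.

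For \textbf{average pooling}, pooling commutes with subtraction, giving
\[
\mathcal{P}(g(x+\delta)) - \mathcal{P}(g(x)) = \frac{1}{HW}\sum_{i,j} d_{ij}(\delta),
\]
the empirical mean of $HW$ independent, zero-mean variables lying in $[a,b]$. This is exactly the setting of Hoeffding's inequality, whose two-sided form bounds the tail at level $\gamma$ by $2\exp\big(-2(HW)^2\gamma^2 / (HW(b-a)^2)\big)$. Simplifying the exponent to $-2HW\gamma^2/(b-a)^2$ and requiring the result to be at most $p$ is precisely \eqref{eq-average-pooling}, so this case is a direct application.

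For \textbf{max pooling}, I would first invoke the elementary Lipschitz bound $|\max_{i,j} u_{ij} - \max_{i,j} v_{ij}| \le \max_{i,j}|u_{ij} - v_{ij}|$ with $u = g(x+\delta)$ and $v = g(x)$, so that
\[
|\mathcal{P}(g(x+\delta)) - \mathcal{P}(g(x))| \le \max_{i,j}|d_{ij}(\delta)|.
\]
It then suffices to control the tail of $\max_{i,j}|d_{ij}(\delta)|$. Since each $d_{ij}$ is bounded in $[a,b]$ with zero mean, Hoeffding's lemma makes it sub-Gaussian with variance proxy $(b-a)^2/4$, and the standard maximal inequality for the maximum of $2HW$ sub-Gaussian variables (the factor $2$ absorbing the absolute value) yields
\[
\mathbb{E}_{\delta}\Big[\max_{i,j}|d_{ij}(\delta)|\Big] \le \frac{b-a}{2}\sqrt{2\log(2HW)} = (b-a)\sqrt{\log\sqrt{2HW}}.
\]
Markov's inequality then converts this expectation bound into $\mathbb{P}(\max_{i,j}|d_{ij}(\delta)| \ge \gamma) \le (b-a)\sqrt{\log\sqrt{2HW}}/\gamma$, which combined with the Lipschitz step gives \eqref{eq-max-pooling}.

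I expect the main obstacle to be the constant bookkeeping in the max-pooling case: one must feed the $(b-a)/2$ variance proxy from Hoeffding's lemma and the factor $2$ inside the logarithm (from passing to $\max|d_{ij}|$ rather than $\max d_{ij}$) into the sub-Gaussian maximal inequality, and then verify that $\frac{b-a}{2}\sqrt{2\log(2HW)}$ collapses exactly to $(b-a)\sqrt{\log\sqrt{2HW}}$. A conceptual point worth noting is that independence is genuinely needed only for the Hoeffding step in the average-pooling case; the maximal inequality and Markov argument for max pooling hold without it, so the two halves of the proposition do not lean on the hypotheses symmetrically.
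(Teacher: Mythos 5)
Your proof is correct and takes essentially the same route as the paper: Hoeffding's inequality for the average-pooling tail, and for max pooling a bound on $\mathbb{E}_{\delta}\max_{i,j}|d_{ij}|$ of exactly $(b-a)\sqrt{\log\sqrt{2HW}}$ followed by Markov's inequality --- the ``standard sub-Gaussian maximal inequality'' you invoke is precisely what the paper rederives by hand via Hoeffding's lemma, Jensen's inequality, and the choice $\lambda = \sqrt{8\log(2HW)/(b-a)^2}$, with matching constants. Your explicit Lipschitz step $|\max_{i,j}u_{ij}-\max_{i,j}v_{ij}|\le\max_{i,j}|u_{ij}-v_{ij}|$ is in fact slightly more careful than the paper, which silently identifies $\mathcal{P}(g(x+\delta))-\mathcal{P}(g(x))$ with $\mathcal{P}(d)$ even though max pooling is nonlinear; your closing observation that independence is only needed in the average-pooling half is also accurate.
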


\begin{proof}
    Firstly,
    notice that $ \mathcal{P}(\Delta)=\frac{1}{HW}\sum_{i,j=1}^{H,W} \Delta_{i,j} $ if the average pooling is applied.
    Applying the Hoeffding's inequality~\cite{boucheron2013ieq} on the interval $[\frac{a}{HW}, \frac{b}{HW}]$,
    we have
	\begin{align*}
		\mathbb{P}(|\mathcal{P}(\Delta)| \ge \gamma) \le 2\exp \bigg( -\frac{2HW \gamma^2}{(b-a)^2} \bigg).
	\end{align*}
    Therefore, Eq. \eqref{staIneq} holds if
	\begin{align*}
		2\exp \bigg( -\frac{2HW \gamma^2}{(b-a)^2} \bigg) \le p,
	\end{align*}
	which comes to our first conclusion Eq. \eqref{eq-average-pooling}.

	Secondly,
    note that $ \mathcal{P}(\Delta)=\max_{1\leq i \leq H, 1\leq j \leq W} \Delta_{i,j} $ if the max pooling is applied.
    Define the following logarithmic moment-generating function $\psi_{\Delta_{i,j}} (\lambda) = \log \mathbb{E}_\delta [e^{\lambda \Delta_{i,j}}]$,
    where  $\lambda $ is a variable to be determined later.
    Using the Hoeffding's lemma~\cite{boucheron2013ieq} yields
	\begin{align}
		\label{eq-lmgf}
		\psi_{\Delta_{ij}}(\lambda) \le \frac{\lambda ^2 (b - a)^2}{8}, \quad \forall i,j.
	\end{align}
	Then, it follows from the Jensen's inequality \cite{rudin1978real} and Eq.~\eqref{eq-lmgf} that
	\begin{align*}
		\begin{array}{ll}
		  &\exp{(\lambda \mathbb{E}_\delta [\max_{i,j} |\Delta_{i,j}|])} \\
		\le & \mathbb{E}_\delta [\exp (\lambda \max_{i,j} |\Delta_{i,j}|)] \\
		\le &\sum_{i,j=1}^{H,W} (\mathbb{E}_\delta [\exp(\lambda \Delta_{ij})] + \mathbb{E}_{\delta} [\exp (-\lambda \Delta_{ij})]) \\
        \le &2HW \exp (\lambda^2 (b - a)^2 / 8).
		\end{array}
	\end{align*}
	Taking logarithm on both sides yields
	\begin{align*}
		\mathbb{E}_\delta [\max_{i,j} |\Delta_{ij}|] \le \frac{\log 2HW}{\lambda} + \frac{\lambda (b - a)^2}{8}.
	\end{align*}
	Setting $\lambda = \sqrt{8 \log 2HW / (b - a )^2}$  gives
	\begin{align*}
		\mathbb{E}_{\delta} [\max_{i,j} |\Delta_{ij}|] \le  \sqrt{\frac{(b - a)^2 \log 2 HW}{2}}.
	\end{align*}
	Using the Markov's inequality \cite{boucheron2013ieq} and the above estimation, we have
	\begin{align*}
		\mathbb{P}(|\mathcal{P}(\Delta)| \ge \gamma)
		&\le \mathbb{P}(\max_{i,j} |\Delta_{ij}| \ge \gamma) \\
		&\le \frac{\mathbb{E}_\delta [\max_{i,j} |\Delta_{ij}|] }{\gamma} \\
		&\le \frac{(b - a)\sqrt{\log\sqrt{2HW}}}{\gamma}.
	\end{align*}
	Therefore, Eq. \eqref{staIneq} holds if
	\begin{align*}
		\frac{(b - a)\sqrt{\log\sqrt{2HW}}}{\gamma} \le p,
	\end{align*}
	which comes to our second conclusion Eq. \eqref{eq-max-pooling}.
\end{proof}

\begin{rmrk}
There are two assumptions regarding $ \Delta $ in Proposition \ref{proposition-stability}.
One is the zero-mean assumption, which can be satisfied in some cases, such as when the condition $\mathbb{E}[\delta] = 0$ holds and the encoder $g$ is linear.
The other is about the independence between the feature map entries.
This is a somewhat strict requirement not easy to meet in practice.
However, our experiments show that the proposed modifications are still effective even if the network does not satisfy these assumptions.
\end{rmrk}

\begin{rmrk}
Proposition \ref{proposition-stability} implies that, in the context of average pooling,
as the feature sizes $H$ and $W$ increase-indicating better model capacity-a more stringent threshold $\gamma$ is needed for Eq.~\eqref{eq-average-pooling} to hold.
This tighter threshold signifies enhanced anti-perturbation ability.
Therefore, enlarging feature sizes before applying average pooling may concurrently boost both model capacity and anti-perturbation ability,
thereby leading to improved adversarial robustness.
\\
In contrast, max pooling presents a trade-off: larger feature maps (indicative of better model capacity) require a larger $\gamma$ (indicating worse anti-perturbation ability) to meet Eq. ~\eqref{eq-max-pooling}.
Conversely, opting for a smaller size allows for tighter thresholds but often results in diminished model capacity.
Note that adversarial robustness depends on the combined influence of model capacity and anti-perturbation ability.
However, this intrinsic trade-off complicates the determination of an optimal feature size for maximum pooling.
\end{rmrk}

\begin{rmrk}
It should be noticed that increasing feature map sizes before average pooling can not always guarantee better anti-perturbation ability,
because the term $(b - a)$ is also positively related to $HW$.
Too large sizes may result in a smaller $\frac{HW}{(b - a)^2}$.
In this case, the condition Eq.~\eqref{eq-average-pooling} holds only for some meaninglessly large thresholds.
Nonetheless, as can be seen from the experiments in Section \ref{section-experiments},
the designs of most neural networks are far from optimal.
\end{rmrk}

In what follows., we will take the average pooling operator as the first option.
It is worth mentioning that the average pooling here refers specifically to the operator bridging the convolutional encoder and classifier.
For networks that adopt other operations for this goal, we can replace it with average pooling for better robustness.

The toy example illustrated in Figure~\ref{fig-perturbation-toy}  empirically verifies our findings (see Section \ref{section-empirical-analysis} for design details).
The disturbance after average pooling is small, and becomes further smaller as the feature size increases.
On the contrary, once max pooling is applied, the output disturbance gets larger with the increase of feature size.

\begin{figure}[hbt]
	\center
	\includegraphics[width=.45\textwidth]{./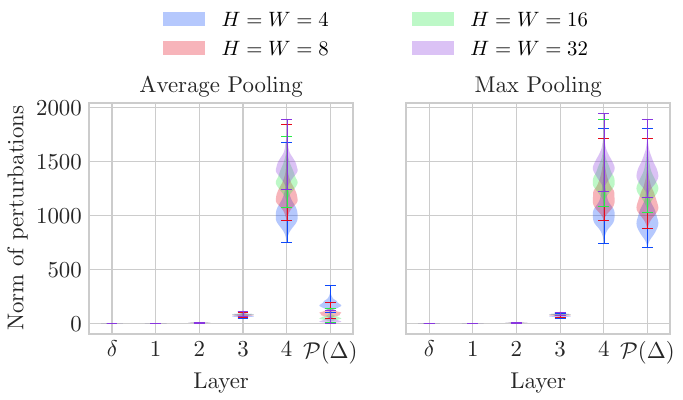}
	\caption{
		The $\ell_{\infty}$ norm of input perturbations from $\mathcal{U}[-0.1, 0.1]$ and the resulting disturbances at each layer.
		\textbf{Left:} A randomly initialized CNN followed by average pooling;
		\textbf{Right:} A randomly initialized CNN followed by max pooling.
	}
	\label{fig-perturbation-toy}
\end{figure}

\section{Feasible Ways to Enlarge Feature maps}

In this section, we present two feasible ways to enlarge convolutional feature maps: \textit{upsampling inputs} versus \textit{shrinking sliding strides}.
For most existing CNNs, they are easy-to-use and plug-and-play; however, applying them correctly and effectively needs more investigation.

\subsection{Upsampling inputs}
\label{section-upsampling}

\begin{figure}
	\centering
	\subfloat[]{\label{fig-upsampling}\includegraphics[width=0.24\textwidth]{./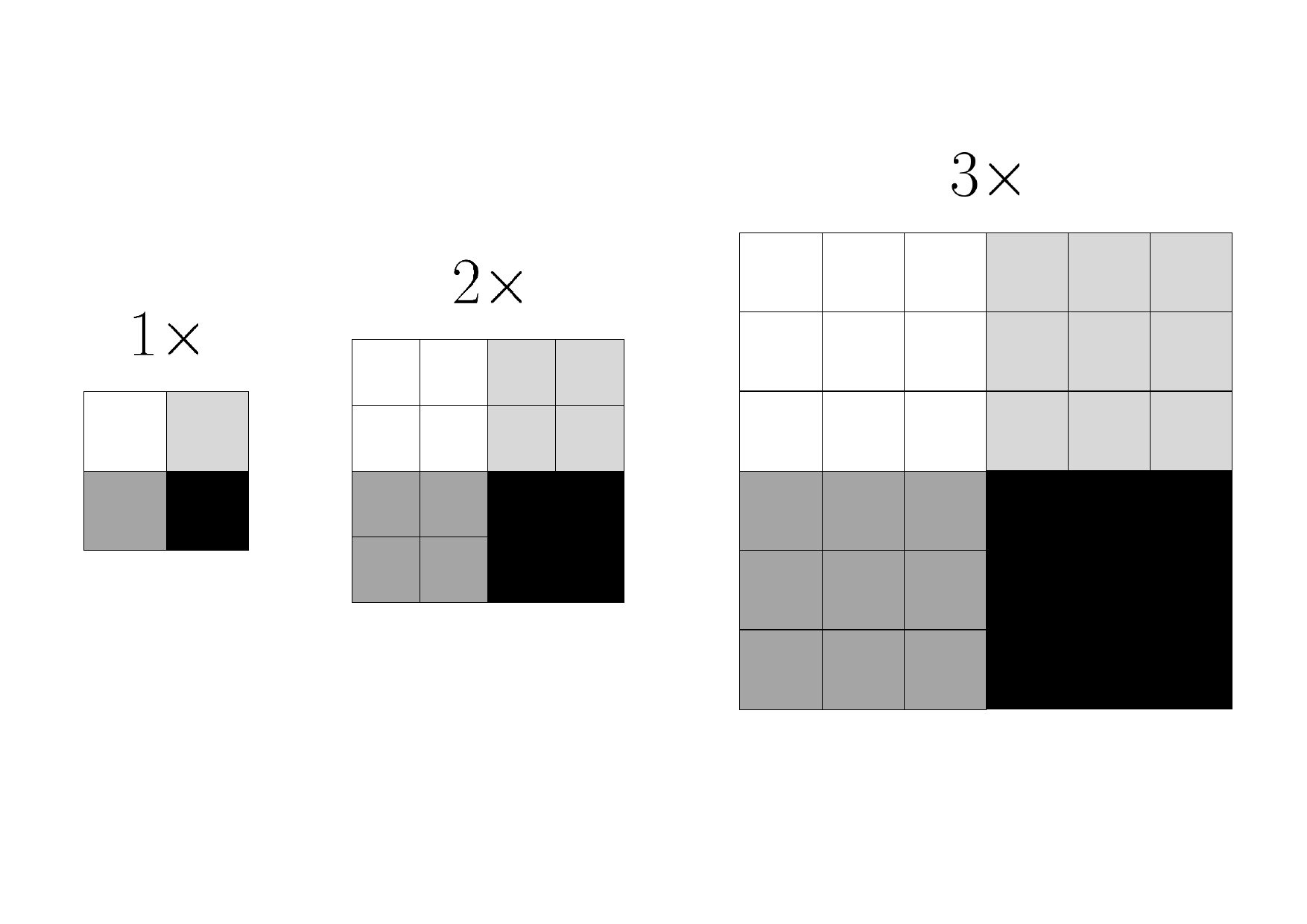}}
	\subfloat[]{\label{fig-redundancy}\includegraphics[width=0.24\textwidth]{./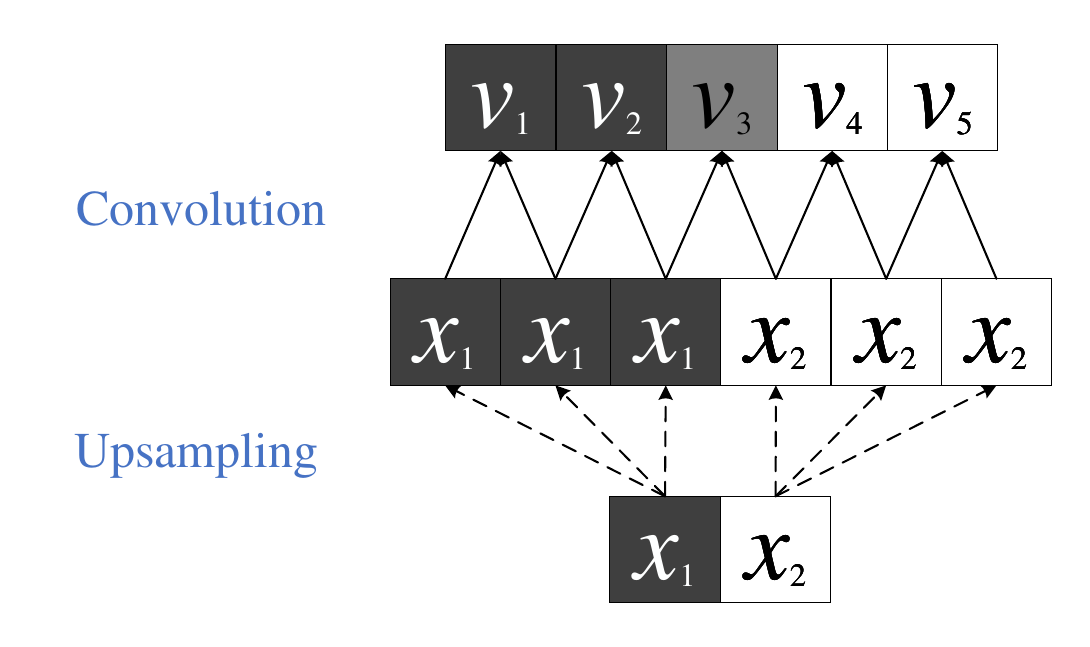}}
	\caption{
		(a) Upsampling the input using nearest interpolation with different scales.
		(b) An illustration of feature redundancy caused by unsampling.
	}
\end{figure}

A straightforward way is to upsample the input such that the subsequent feature maps are also scaled up.
Classical interpolators and learnable upsampling methods can be used here.
Specifically, nearest and bilinear interpolations are considered here
and Figure \ref{fig-upsampling} illustrates a case of nearest interpolation, where the input is upsampled by repeating the nearest point.
In addition, deconvolution \cite{zeiler2010deconv}, which is a fractionally-strided convolution with learnable filters, will also be compared in this paper.

Upsampling inputs may generate redundant feature entries so that only very local structures are convolved in the early stages.
A toy example shown in Figure \ref{fig-redundancy} can demonstrate this phenomenon.
For an input $ x \in \mathbb{R}^2 $, when we enlarge it by a factor of 3 and then apply a 2-dimensional filter,
we will obtain an output feature vector  $ v=(v_1,v_2,\ldots, v_5) $.
In this case, $ v_1=v_2 $ and $ v_4=v_5 $ always hold true no matter what value the input takes.
It means that although the output appears to be the length of 5, two of them are actually redundant.
A similar problem exists implicitly in other upsampling methods, which prevents global information from being timely captured by the model.
Admittedly, this redundancy issue may not reduce the final performance, but it raises concerns about efficiency.
The experiments in Section \ref{section-Up-verus-stride} will compare in detail the modifications introduced here and next.

\subsection{Shrinking sliding strides}

\begin{figure}
	\centering
	\centering
	\scalebox{0.4}{\includegraphics{./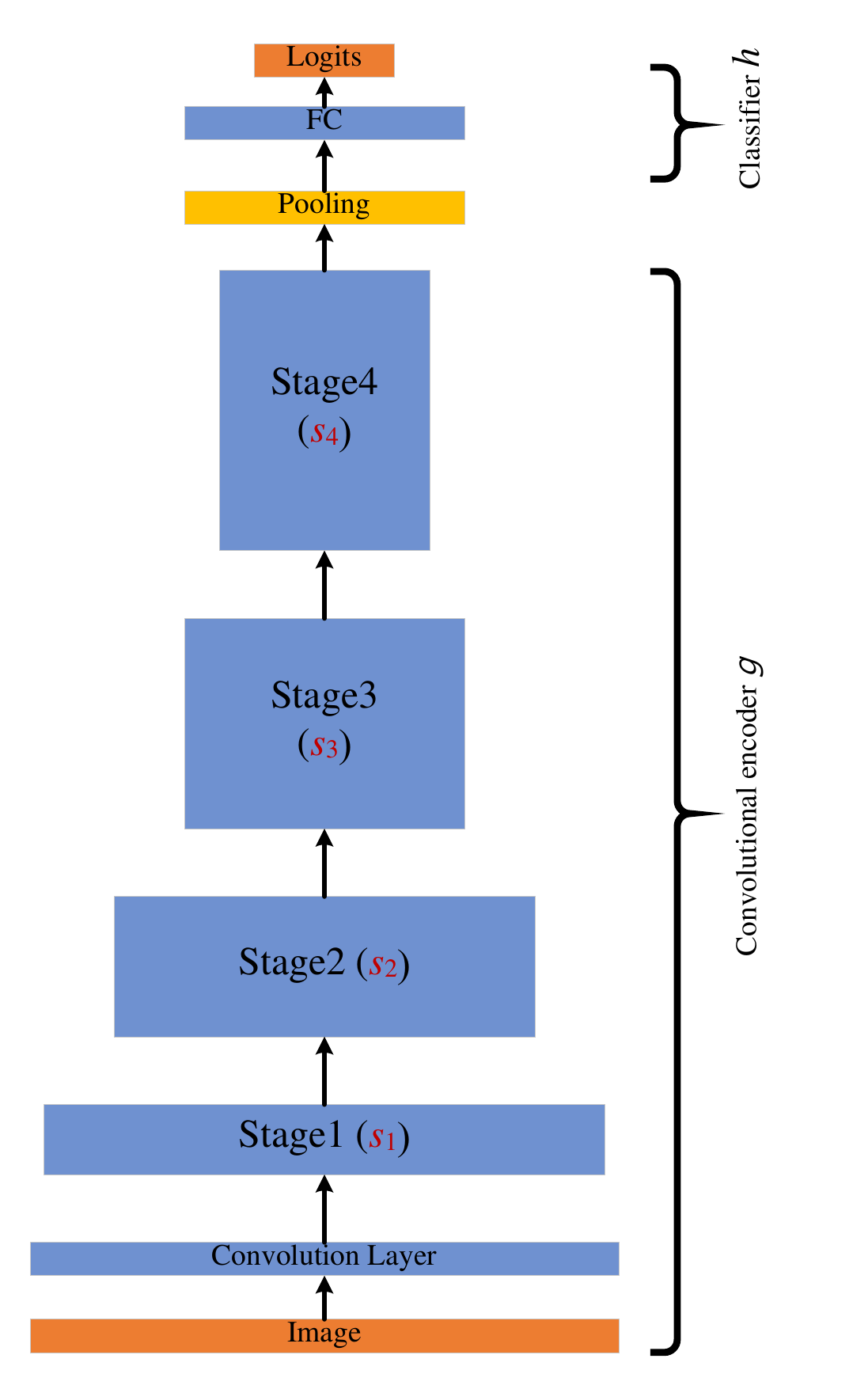}}
	\caption{
		The architecture of ResNet with the \textcolor{red}{$s_1$-$s_2$-$s_3$-$s_4$} sliding stride configuration.
		It consists of 2 fixed terminals and  4 intermediate stages. Each stage contains several convolutional layers.
	}
	\label{fig-framework-resnet18}
\end{figure}

Let us take a closer look at the convolution operation.
Convolving the filter $w \in \mathbb{R}^{k \times k}$ across an input $X \in \mathbb{R}^{\mathfrak{H} \times \mathfrak{W}}$
in a \textit{cross-correlation} fashion \cite{goodfellow2016deeplearning} can be defined as
\begin{align*}
	 [X \star w]_{ij} = \sum_{m=1}^k \sum_{n=1}^k w_{mn} X_{i+m - 1,j+n - 1}.
\end{align*}
The convolution performed in practice sometimes skips some positions for the purpose of downsampling.
Sampling every $s$ positions in each direction yields the following convolution operation
\begin{align*}
	[X \star w]_{ij}  = \sum_{m=1}^k \sum_{n=1}^k w_{mn} X_{(i - 1)s   + m, (j - 1)s  + n},
\end{align*}
which acts as a downsampling if the \textit{sliding stride} of $s$ is greater than 1.
Now the  corresponding output feature map sizes $ H $ and $ W $ are determined by
\begin{align}
\label{slideOut}
\begin{split}
	H &= \lfloor \frac{\mathfrak{H} + 2p - k}{s} + 1 \rfloor,
    \\[5pt]
	W &= \lfloor \frac{\mathfrak{W} + 2p - k}{s} + 1 \rfloor,
\end{split}
\end{align}
where $p$ is the padding that allows the input to be downsampled exactly to $1/s$.

It can be seen from Eq. \eqref{slideOut}
that shrinking the sliding stride $s$ (\eg, from $2$ to $1$) leads to larger (\eg, $2 \times$) convolutional feature maps,
producing an effect similar to upsampling the input.
In addition, this operation will not cause the feature redundancy issue mentioned earlier.

We take ResNet as an example to illustrate how to modify the stride configurations.
As shown in Figure \ref{fig-framework-resnet18},
the convolutional part of ResNet consists of 4 intermediate stages, each of which contains several convolutional layers.
The sliding stride configuration of the conventional ResNet is 1-2-2-2.
If it takes as input a $32 \times 32$ image, it would output $4 \times 4$ feature maps.
By changing this baseline configuration to 1-1-2-2, all feature maps after the second stage will be enlarged by a factor of 2,
resulting in the final larger sizes (\ie, $8 \times 8$ feature maps for a $32 \times 32$ input) before average pooling.
In addition, although the 1-1-2-2 and 1-2-1-2 stride configurations can lead to the same feature map sizes,
the configuration of 1-1-2-2 is still preferred for low-resolution datasets as it contributes to better model capacity (see Section \ref{section-Up-verus-stride}).
However, in the case of high resolution, the opposite may occur due to the slower perceptual field growth in the former (see Appendix \ref{section-high-resolution}).

It is worth mentioning that not all CNNs employ convolution for downsampling.
Some classical architectures such as AlexNet and VGG resort to max pooling that slides in the same manner as introduced above.
Needless to say, they can also be improved in the same way.

\begin{rmrk}
WideResNet \cite{zagoruyko2016wideresnet} has three stages with a stride configuration of 1-2-2 and can be seen as a wider version of the modified ResNet18 with the (1-1)-2-2 stride configuration.
In this sense, WideResNet's stride configuration is already optimal.
To further improve this network architecture, the strategy presented in \cite{huang2021wideresnetR} would be a feasible way.
\end{rmrk}

\section{Experiments}
\label{section-experiments}

In this section, we comprehensively investigate the effectiveness of the proposed modifications.
Firstly, we carefully compare in Section \ref{section-Up-verus-stride} the two feasible ways, upsampling inputs versus shrinking sliding strides.
More empirical analysis regarding the suggestions from Proposition~\ref{proposition-stability} is left in Section~\ref{section-empirical-analysis}.
Then, we extend the improvements to more classical neural network architectures, which are presented in Section~\ref{section-other-architectures}.
Section \ref{section-overall} further demonstrates that the proposed modifications are orthogonal to defense mechanisms and datasets.
Finally, gradient obfuscation concerns~\cite{athalye2018obfuscation} will be discussed in Section \ref{section-gradient-obfuscation}.

\subsection{Experimental setup}
\label{section-setup}

In this part, we introduce the datasets, baseline defenses, evaluation metrics, attacks, and architectures.

\begin{table}[htb]
	\caption{The Basic Setup for Adversarial Attacks in $\ell_{\infty}$ Norm.}
	\label{table-settings}
	\centering
	\scalebox{.8}{
		\begin{tabular}{cccccccc}
			\toprule
			& FGSM & PGD & PGD & PGD & DeepFool& AutoAttack & \\
			\midrule
			Number of iterations & - & 10 & 20 & 50 & 50  & - &  \\
			Step size & - & 0.25 & 0.25 &  0.033333 & 0.02& - & \\
			\bottomrule
	\end{tabular}}
\end{table}

\begin{figure*}
	\center
	\subfloat[Upsampling]{\label{fig-performance-scale}\includegraphics[height=0.18\textwidth]{./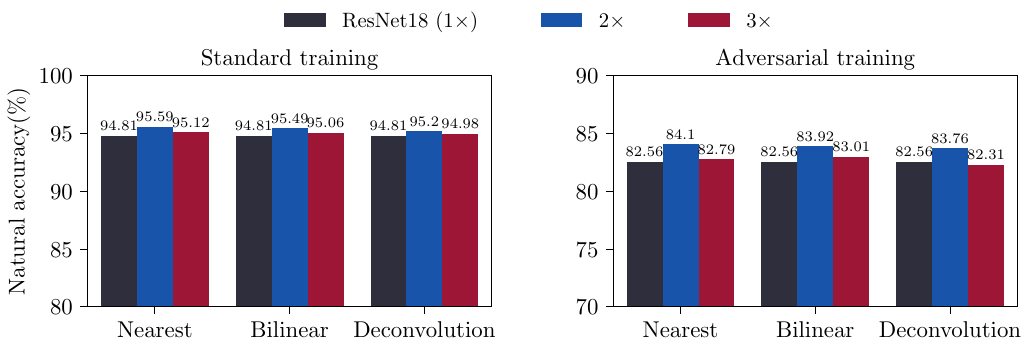}}
	\hspace{12pt}
	\subfloat[Stride configurations]{\label{fig-performance-stride}\includegraphics[height=0.18\textwidth]{./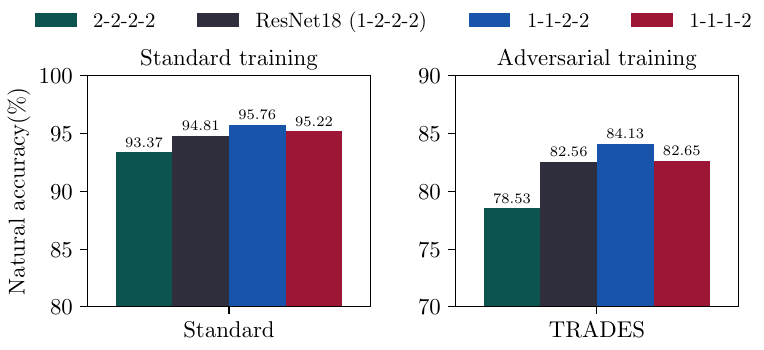}}
	\caption{
		Natural accuracy (\%) comparisons of ResNet18 and its modified versions on CIFAR-10.
		(a) Upsampling using bilinear and nearest interpolation, or learnable deconvolution under different scales;
		(b) Shrinking sliding strides using various configurations.
	}
	\label{fig-performance}
\end{figure*}

\textbf{Datasets.}
The widely-used CIFAR-10 and CIFAR-100 datasets \cite{krizhevsky2009cifar} are considered for robustness evaluation.
They contain 60,000 $32 \times 32$ real-world images in 10 and 100 classes, respectively.
In addition, CIFAR-10-C \cite{hendrycks2019cifarc}, consisting of nineteen different types of semantically invariant corruptions (\eg, blur and noise),
will be used to measure the resistance against common corruptions.
Since there is no requirement for hyperparameter tuning, the validation set will not be specially prepared.

\textbf{Baseline defenses.}
We implement the baselines including AT \cite{madry2018pgd}, ALP \cite{kannan2018alp}, TRADES \cite{zhang2019trades},
MART \cite{wang2020mart}, FAT \cite{zhang2020fat}, AWP \cite{wu2020awp} and DAJAT \cite{addepalli2020dajat},
identically following the settings suggested in the original papers.
All adversarial samples required during training are crafted on the fly by PGD-10 within the perturbation budget of $\epsilon=8/255$ (in $\ell_{\infty}$ norm).
In particular, we apply an early-stopping learning schedule~\cite{pang2020bag} to AT, ALP and TRADES,
namely, the SGD optimizer with an initialized learning rate of 0.1 which is decayed by a factor of 10 at 100 and 105 epochs.

\textbf{Evaluation metrics.}
The model capacity and anti-perturbation ability are the main concerns in this paper,
but they are difficult to analyze quantitatively.
We thus use the following metrics to estimate them:
\begin{itemize}[leftmargin=*]
\item \textit{Natural accuracy} is the proportion of clean samples that are correctly classified.
		In general, a neural network with superior model capacity can fit the data distribution better and thus enjoy higher natural accuracy \cite{cao2019}.
\item \textit{Consistent accuracy} is the proportion of clean samples whose classification results remain unchanged after perturbation.
		It measures the resistance to perturbation, \ie, anti-perturbation ability.
\item \textit{Adversarial robustness} measures the intersection of the two,
		which is the proportion of clean samples that are correctly and consistently classified within the given perturbation budget.
\end{itemize}

\textbf{Attacks.}
On the one hand, the exact adversarial robustness must be calculated by exhausting all possible perturbations, which is computationally prohibitive.
On the other hand, a specific attack may overestimate the robustness.
Hence, to reliably evaluate the adversarial robustness,
several benchmark attacks including
FGSM \cite{goodfellow2015}, PGD \cite{madry2018pgd}, DeepFool \cite{moosavi2016deepfool}
and AutoAttack \cite{croce2020aa} are employed.
In particular,
AutoAttack is one of the most elaborate and widely used evaluation metrics,
allowing for a reliable estimation of adversarial robustness.

Except that AutoAttack is due to the source code from \cite{croce2020aa},
all other implementations are provided by FoolBox \cite{foolbox2017}.
The major settings of these attacks are listed in Table \ref{table-settings},
wherein step size denotes the relative step size of PGD and the overshoot of DeepFool, respectively.

\textbf{Architectures.}
Besides the widely-used ResNet18 \cite{he2016resnet},
we also try to explore appropriate stride configurations for other CNNs
including AlexNet \cite{krizhevsky2012alex}, VGG \cite{simonyan2015vgg} and PreActResNet18~\cite{he2016preactresnet}
(see Appendix \ref{appendix-experimental-details} for architectural details).
In the following, we use a form similar to $s_1$-$s_2$-$s_3$-$s_4$ to represent different sliding stride configurations.

\subsection{Upsampling inputs versus shrinking sliding strides}
\label{section-Up-verus-stride}

\begin{figure}
	\centering
	\scalebox{.8}{\includegraphics{./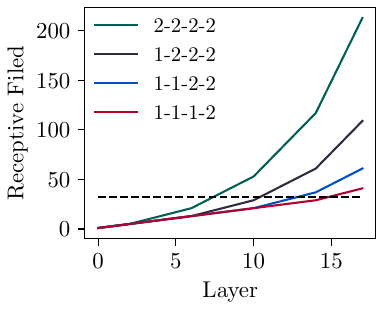}}
	\caption{
		Receptive field of ResNet18 with various stride configurations.
		The \textbf{horizonal dashed line} therein indicates the image size of 32.
	}
	\label{fig-strides-receptive-field}
\end{figure}

We first compare the effectiveness of the two proposed modifications in terms of model capacity and anti-perturbation ability.
And the last part concerning computational efficiency will justify why shrinking sliding strides is preferable to upsampling inputs.
For upsampling, we also consider a learnable method called deconvolution \cite{zeiler2010deconv},
which is a fractionally-strided convolution with learnable filters.
TRADES is adopted as the representative of adversarial training because it rarely kills the training in the first iterations.
In addition, unless otherwise stated, the robustness reported hereinafter is empirically evaluated by PGD-20.

\textbf{Model capacity.}
As can be seen in Figure \ref{fig-performance}, both modifications show similar results:
the natural accuracy of the modified ResNet18 is always better than the baseline, either by standard training or adversarial training.
Since natural accuracy is largely determined by model capacity,
it suggests that enlarging convolutional feature maps helps to promote model capacity.

However, this conclusion is true for scale factors up to 2.
Further increasing feature map sizes (marked in \textcolor{red}{red}) will lead to a decrease in performance.
There are two possible reasons for this phenomenon:
1) When the factor is large enough, the resulting neurons become redundant.
2) Excessive shrinkage of downsampling operators slows down the growth of the receptive field, which in turn deteriorates the learning ability of the network.
As shown in Figure \ref{fig-strides-receptive-field}, ResNet18 with a 1-1-1-2 stride configuration cannot capture global information until the 16-th layer.

\begin{table}[htb]
	\caption{
		The \underline{Consistent Accuracy} (\%) on CIFAR-10 under various perturbation budgets $\epsilon$.
		\textbf{Top}: Upsampling inputs using interpolation or learnable methods;
		\textbf{Bottom}: Shrinking sliding strides using various configurations.
	}
	\label{table-stability}
	\centering
	\scalebox{.85}{
		\begin{tabular}{c|cccccc}
			\toprule
			& \multicolumn{6}{c}{Consistent Accuracy ($\epsilon, \ell_{\infty}$)} \\
			\midrule
		Modifications &  $0/255$  &  $2/255$  &  $4/255$   & $6/255$   & $8/255$   &  $10/255$  \\
			\midrule
		ResNet18 ($1\times$)&100&86.54&75.66&65.48&55.67&45.91\\
		Nearest ($2\times$)&100&\textbf{87.59}&\textbf{76.97}&66.72&\textbf{57.11}&47.30\\
		Nearest ($3\times$)&100&87.17&75.62&65.60&55.71&46.82\\
		Bilinear ($2\times$)&100& 87.35 & 76.64 & 66.19 & 56.52 & 46.95  \\
		Bilinear ($3\times$)&100&86.76&76.29&65.81&55.99&46.53 \\
		Deconvolution ($2\times$)&100& 87.27 & 76.78 & \textbf{66.88} & 56.73 & \textbf{47.49}  \\
		Deconvolution ($3\times$)&100& 86.68 & 75.46 & 65.66 & 55.84 & 46.73  \\
		\midrule
		2-2-2-2  &100&85.05&73.19&61.80&51.61&42.31\\
		2-1-2-2  &100& 86.99 & 75.23 & 64.35 & 54.68 & 45.27 \\
		ResNet18(1-2-2-2)&100&86.54&75.66&65.48&55.67&45.91\\
		1-2-2-1&100& 86.80 & 75.96 & 65.51 & 55.72 & 46.10 \\
		1-2-1-2&100& 87.41 & 76.67 & 66.11 & 56.51 & 46.82 \\
		1-1-2-2&100&\textbf{87.78}&\textbf{76.86}&\textbf{66.39}&\textbf{57.11}&\textbf{47.44}\\
		1-1-1-2&100&86.92&76.16&65.82&56.59&47.26\\
			\bottomrule
		\end{tabular}
	}
\end{table}

\begin{figure}
	\centering
	\scalebox{.7}{\includegraphics{./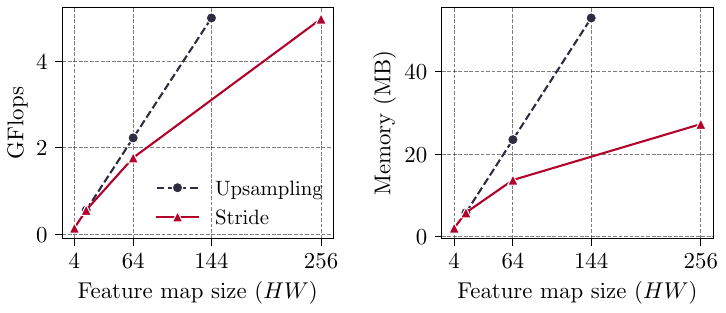}}
	\caption{
		Computational costs of the inference on a $32\times 32$ image.
		\textbf{Left}: Floating point operations (Flops);
		\textbf{Right}: Total memory.
	}
	\label{fig-computational-cost}
\end{figure}

\begin{figure*}
	\centering
	\scalebox{0.7}{\includegraphics{./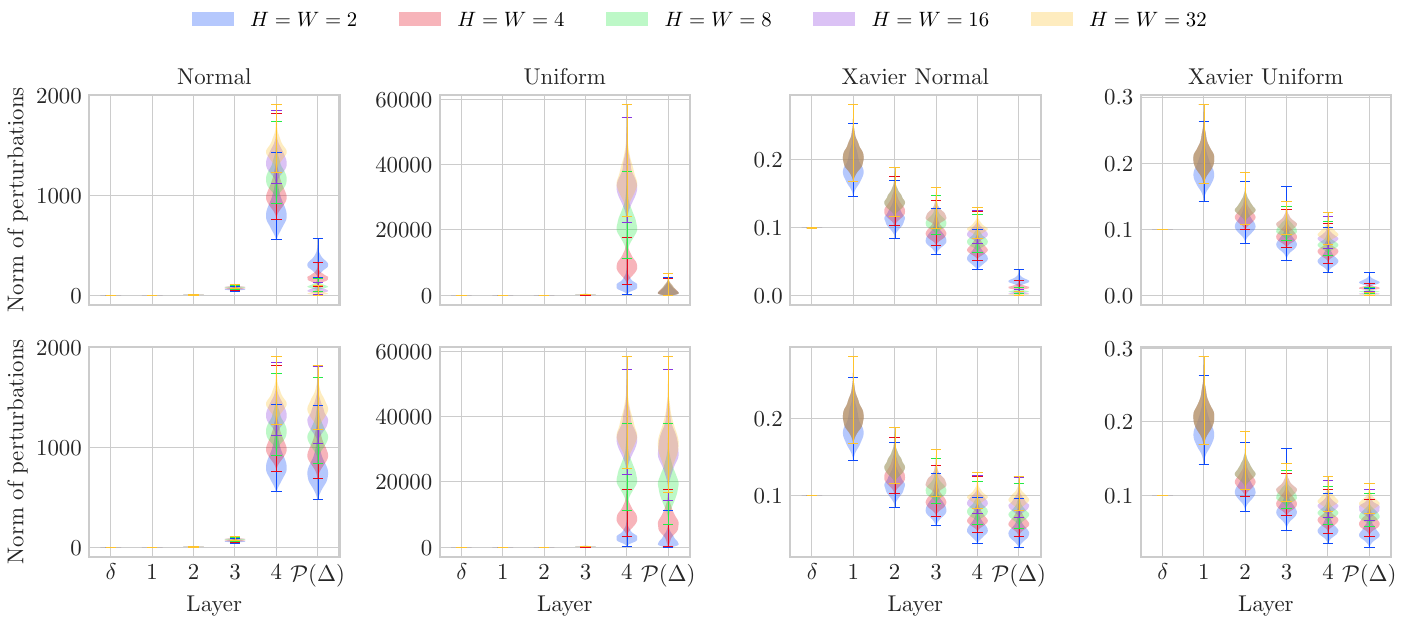}}
	\caption{
		The $\ell_{\infty}$ norm of input perturbations from $\mathcal{U}[-0.1, 0.1]$ and the resulting changes at each layer.
		Different initialization methods, including Normal, Uniform, Xavier Normal and Xavier Uniform \cite{glorot2010xavier}, are used to eliminate the effect of randomness.
		\textbf{Top:} A randomly initialized CNN followed by average pooling;
		\textbf{Bottom:} A randomly initialized CNN followed by max pooling.
	}
	\label{fig-empirical-random}
\end{figure*}

\textbf{Anti-perturbation ability.}
We use consistent accuracy, the proportion of clean samples whose classification results remain unchanged after perturbation,
to estimate the anti-perturbation ability of networks.

Table \ref{table-stability} presents the performance across various modified RestNet versions,
among which the best results are achieved in the case of enlarging convolutional feature maps by a factor of 2.
Note that the 1-2-2-1 and 1-2-1-2 stride configurations are not as effective as the 1-1-2-2 configuration,
because the latter, with the low-resolution dataset, contributes better to model capacity (see Appendix \ref{section-high-resolution} for a discussion concerning the high-resolution case).

Although a further increase in size hurts the performance slightly, this degradation in consistency accuracy is much more acceptable than that in natural accuracy.
As can be seen from Figure \ref{fig-performance-stride}, ResNet18 with the 1-1-1-2 stride configuration performs comparably to the baseline (marked in \textbf{black}) on natural accuracy.
If the consistent accuracy benefits primarily from the model capacity, a similar conclusion should be drawn from Table \ref{table-stability}.
However, ResNet18 with the 1-1-1-2 stride configuration significantly surpasses the baseline, even on par with the optimal configuration of 1-1-2-2.
Hence, we can infer that the consistent accuracy improvements benefit from the enhanced anti-perturbation ability.
It is worth noting that upsampling does not have similar properties, especially when high perturbation budgets are considered.
This corroborates what was discussed in Section \ref{section-upsampling},
where upsampling may lead to feature redundancy and thus weakens the learning ability of the model.

\textbf{Computational efficiency.}
Figure \ref{fig-computational-cost} compares the floating point operations (Flops)
and the total memory required for the two modifications.
Upsampling inputs consumes more theoretical Flops and memory on meaningless redundant elements.
Notably, this computational gap will widen further in practice,
especially in adversarial training scenarios.
So shrinking sliding strides is preferable.
In addition, some acceleration methods (\eg, FastAT \cite{wong2020fgsmrs})
can also be applied to the improved architectures for acceptable training costs.

According to the above three groups of experiments, we can conclude that
1) properly enlarging features are capable of improving model capacity and anti-perturbation ability;
2) shrinking sliding strides is more effective than upsampling inputs.

\subsection{Average pooling versus max pooling}
\label{section-empirical-analysis}

Proposition \ref{proposition-stability} suggests that average pooling is a superior choice to max pooling.
Here, we compare them on random noise and real-world data for a complete understanding.

\textbf{Random noise.}
A randomly initialized CNN,
which contains 4 convolutional layers (with 3, 16, 32 and 64 channels, respectively) followed by average pooling or max pooling,  is considered here.
It takes a zero-valued image as input; in the absence of perturbations, the subsequent feature maps should be zero-valued as well.
As such, the norm of the changes resulting from the input perturbations indicates the anti-perturbation ability: the lower the better.
For a fair comparison, each subplot in Figure \ref{fig-empirical-random} is drawn based on the same 1000 uniform random noises.
Although some assumptions in Proposition \ref{proposition-stability} can not be strictly satisfied,
the observed phenomenon is still consistent with our analysis in Section~\ref{section-theory}:
1) The resulting changes after average pooling are fairly small, and become smaller as the feature size increases.
2) Conversely, max pooling has little effect on mitigating changes regardless of the weight initialization method.

\begin{table}[tb]
	\caption{
	Natural accuracy (\%) and adversarial robustness (\%) on CIFAR-10 using ResNet18 with Average Pooling or Max Pooling.
	}
	\label{table-avg-max}
	\centering
	\scalebox{.9}{
		\begin{tabular}{c||c|cc}
			\toprule
			Configuration & Pooling Type & Natural & PGD-20 \\ 
			\midrule
			\multirow{2}{*}{2-2-2-2} & Average & 78.53 & 49.05  \\ 
			 & Max & 79.36 & 47.30 \\ 
			\midrule
			\multirow{2}{*}{ResNet18 (1-2-2-2)} & Average &82.56& 53.50  \\ 
			 & Max &81.49&53.29 \\ 
			\midrule
			\multirow{2}{*}{1-1-2-2} & Average &\textbf{84.13}&\textbf{55.37} \\ 
			 & Max &81.12&52.87 \\ 
			\midrule
			\multirow{2}{*}{1-1-1-2} & Average &82.65&54.96 \\ 
			 & Max & 78.59 & 50.17  \\ 
			\bottomrule
	\end{tabular}}
\end{table}

\begin{figure}
	\centering
	\scalebox{.5}{\includegraphics{./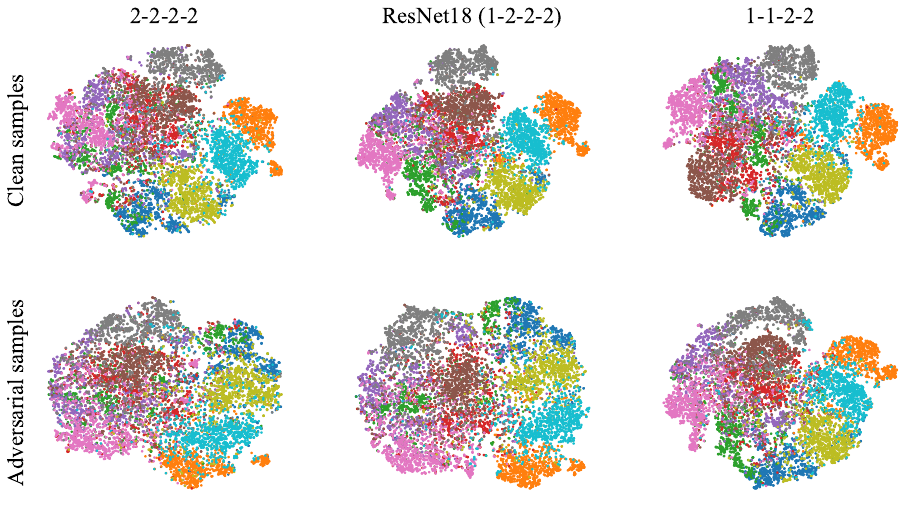}}
	\caption{
		t-SNE \cite{tsne2009van} visualization of features after average pooling.
		Different classes are represented by different colors.
		\textbf{Top:} Clean CIFAR-10 images.
		\textbf{Bottom:} Adversarial CIFAR-10 images (perturbed by PGD-10).
	}
	\label{fig-tsne}
\end{figure}

\textbf{Real-world data.}
Table \ref{table-avg-max} presents the natural accuracy and adversarial robustness evaluated on real-world data, CIFAR-10.
For the baseline and 2-2-2-2 stride configurations, max pooling and average pooling can be used interchangeably.
However, as the size increases, the performance of max pooling decreases significantly.
It corroborates that smaller feature map sizes are more suitable for max pooling.
It is worth noting that for different reasons, max pooling performs poorly with the 2-2-2-2 and 1-1-1-2 stride configurations.
The former is due to the lack of model capacity,
while the latter is mainly caused by the mismatch between max pooling and large feature map sizes.
Furthermore, Figure \ref{fig-tsne} illustrates the t-SNE visualization of feature representations across different stride configurations.
It can be seen that larger feature map sizes lead to slightly better separability, especially for adversarial samples.

From the above comparisons, we can conclude that average pooling is preferable to max pooling because it promotes anti-perturbation ability theoretically and experimentally.

\subsection{Improvements on various neural network architectures}
\label{section-other-architectures}

\begin{table}[htb]
	\caption{
		Natural Accuracy (\%) and Adversarial Robustness (\%) under PGD-20 attack of various neural network architectures on CIFAR-10.		
		The \underline{underline} indicates the modified configuration.
	}
	\label{table-architectures}
	\centering
	\scalebox{.9}{
		\begin{tabular}{c||c|ccc}
			\toprule
			& Configuration & Natural	&  PGD-20 \\
			\midrule
			\multirow{2}{*}{AlexNet} & 2-2-2-2 &70.56&36.57 \\
			   &\underline{1-2-2-2}&\textbf{77.34}&\textbf{41.97} \\
			\midrule
			\multirow{2}{*}{VGG16} & 2-2-2-2-2 &79.61&43.28 \\
			   &\underline{1-1-2-2-2}&\textbf{84.47}&\textbf{48.61} \\
			\midrule
			\multirow{2}{*}{ResNet18} & 1-2-2-2 &82.56&53.50 \\
			   &\underline{1-1-2-2}&\textbf{84.13}&\textbf{55.37} \\
			\midrule
			\multirow{2}{*}{PreActResNet18} & 1-2-2-2 &82.58&53.05\\
			   &\underline{1-1-2-2}&\textbf{83.92}&\textbf{54.64}\\
			\bottomrule
	\end{tabular}}
\end{table}

\textbf{Classical CNNs.}
We extend the sliding stride modification to more classical CNNs.
Table \ref{table-architectures} shows that properly modifying the stride configuration can contribute to the performance of various network architectures.
Note that even though AlexNet and VGG16 employ max pooling by default,
they still enjoy consistent accuracy and robustness gains from the modified configuration.
In addition, the performance gains of AlexNet and VGG16 far exceed those of ResNet18,
mainly because their original configurations contain more unsatisfactory downsampling operations,
so greater gains can be obtained after shrinking the sliding stride.

\begin{table}[tb]
	\caption{
		Natural accuracy (\%) and adversarial robustness (\%) under PGD-20 attack of ViT-tiny on CIFAR-10.
	}
	\label{table-vit}
	\centering

	\scalebox{.9}{
	\begin{tabular}{c|c|cc}
			\toprule
			 & Pooling Type  & Natural	&  PGD-20 \\
			\midrule
			\multirow{2}{*}{ ViT-tiny } & Average & 66.97 & 34.15 \\
			& Max & 65.58 & 33.03 \\
			\midrule
			\multirow{2}{*}{Half patch size} & Average &72.69&36.12 \\
			& Max & 68.40 & 31.95 \\
			\midrule
			\multirow{2}{*}{Nearest ($2\times$)} & Average & \textbf{73.16} & \textbf{37.45} \\
			& Max & 69.06 & 34.03 \\
			\bottomrule
	\end{tabular}}

\end{table}

\textbf{ViTs.}
The success of transformers in NLP leads to an increasing interest in vision transformers.
For example, ViTs~\cite{dosovitskiy2021vits} project each patch of the image linearly into an embedding,
and the subsequent attention modules transform them into discriminative features.
Although ViTs are beyond the scope of CNNs, it is possible to enlarge the feature maps by upsampling or using a smaller patch size.
As shown in Table \ref{table-vit}, the baseline ViT-tiny adopts the patch size of 8 with averaging pooling.
Reducing the patch size to half or upsampling the input image yields $2\times$ feature maps.
We can find that larger feature maps before average pooling can contribute to better resistance to perturbations,
but the conclusion is not true for max pooling.
It is consistent with the conclusion in terms of CNNs.

\begin{table*}[t]
	\center
	\caption{
	Natural accuracy (\%) and robustness (\%) under various attacks within the perturbation budget $\epsilon=8/255$.	
	$\dagger$: Applying the learning schedule \cite{pang2020bag} for early stopping;
	$\ddagger$: The best result on the checkpoint evaluated by PGD-10.
	\underline{Underline} indicates the modified ResNet18.
	}
	\label{table-cifar-linf}
	\scalebox{.9}{
		\begin{tabular}{ccccccccccccc}
			\toprule
		& \multicolumn{2}{c}{Natural}	&  \multicolumn{2}{c}{FGSM} &  \multicolumn{2}{c}{PGD-20}	&  \multicolumn{2}{c}{PGD-50}	&  \multicolumn{2}{c}{DeepFool}	&  \multicolumn{2}{c}{AutoAttack} \\
		\cmidrule{2-3} \cmidrule{4-5} \cmidrule{6-7} \cmidrule{8-9} \cmidrule{10-11} \cmidrule{12-13}
		& 1-2-2-2 & \underline{1-1-2-2} & 1-2-2-2 & \underline{1-1-2-2} & 1-2-2-2 & \underline{1-1-2-2} & 1-2-2-2 & \underline{1-1-2-2} & 1-2-2-2 & \underline{1-1-2-2} & 1-2-2-2 & \underline{1-1-2-2} \\
			\midrule
\multicolumn{13}{c}{CIFAR-10} \\
			\midrule
$\text{AT}^{\dagger}$ \cite{madry2018pgd}          &84.15&86.17 \textcolor{green}{$\uparrow$} &58.85&60.70 \textcolor{green}{$\uparrow$} &52.80&53.24 \textcolor{green}{$\uparrow$} &52.50&53.18 \textcolor{green}{$\uparrow$} &54.32&55.90 \textcolor{green}{$\uparrow$} &48.72&49.83 \textcolor{green}{$\uparrow$}\\				
$\text{ALP}^{\dagger}$ \cite{kannan2018alp}        &86.53&88.11 \textcolor{green}{$\uparrow$} &58.73&59.21 \textcolor{green}{$\uparrow$} &50.88&51.36 \textcolor{green}{$\uparrow$} &50.76&51.27 \textcolor{green}{$\uparrow$} &54.47&55.39 \textcolor{green}{$\uparrow$} &47.38&49.00 \textcolor{green}{$\uparrow$}\\				
$\text{TRADES}^{\dagger}$ \cite{zhang2019trades}     &82.56&84.13 \textcolor{green}{$\uparrow$} &58.06&59.60 \textcolor{green}{$\uparrow$} &53.50&55.37 \textcolor{green}{$\uparrow$} &53.37&55.28 \textcolor{green}{$\uparrow$} &54.50&56.05 \textcolor{green}{$\uparrow$} &49.71&51.24 \textcolor{green}{$\uparrow$}\\				
MART            \cite{wang2020mart}               &82.77&84.67  \textcolor{green}{$\uparrow$} &59.28&60.46 \textcolor{green}{$\uparrow$} &52.44&53.00 \textcolor{green}{$\uparrow$} &52.28&52.91 \textcolor{green}{$\uparrow$}&54.07&54.58 \textcolor{green}{$\uparrow$} &47.29&47.97 \textcolor{green}{$\uparrow$}\\				
FAT         \cite{zhang2020fat}                   &87.34&88.31 \textcolor{green}{$\uparrow$}&55.28&53.43 \textcolor{red}{$\downarrow$}&43.90&42.55 \textcolor{red}{$\downarrow$}&43.79&42.39 \textcolor{red}{$\downarrow$}&51.20&50.04 \textcolor{red}{$\downarrow$}&41.56&40.93 \textcolor{red}{$\downarrow$}\\				
$\text{FAT}^{\ddagger}$  \cite{zhang2020fat}      &87.55&\textbf{88.86} \textcolor{green}{$\uparrow$}&56.38&57.42 \textcolor{green}{$\uparrow$}&46.67&46.91 \textcolor{green}{$\uparrow$}&46.56&46.83 \textcolor{green}{$\uparrow$}&52.54&53.39 \textcolor{green}{$\uparrow$}&44.10&44.95  \textcolor{green}{$\uparrow$}\\				
AT-AWP            \cite{wu2020awp}             &82.32&84.05 \textcolor{green}{$\uparrow$}&59.47&60.41 \textcolor{green}{$\uparrow$}&54.47&54.75 \textcolor{green}{$\uparrow$}&54.40&54.59 \textcolor{green}{$\uparrow$}&54.63&55.30 \textcolor{green}{$\uparrow$}&49.52&50.23  \textcolor{green}{$\uparrow$}\\				
TRADES-AWP        \cite{wu2020awp}            &82.69&84.36 \textcolor{green}{$\uparrow$}&59.96&61.39 \textcolor{green}{$\uparrow$}&55.90&57.14 \textcolor{green}{$\uparrow$}&55.88&57.10 \textcolor{green}{$\uparrow$}&55.94&57.65 \textcolor{green}{$\uparrow$}&51.84&53.51 \textcolor{green}{$\uparrow$}\\				
DAJAT \cite{addepalli2020dajat}   &84.45& 85.96 \textcolor{green}{$\uparrow$}& 61.88 & \textbf{63.24}  \textcolor{green}{$\uparrow$}& 57.61 & \textbf{58.55} \textcolor{green}{$\uparrow$}& 59.49 & \textbf{60.70} \textcolor{green}{$\uparrow$}& 57.37 & \textbf{59.01}  \textcolor{green}{$\uparrow$}& 52.76 & \textbf{54.14}  \textcolor{green}{$\uparrow$} \\
\midrule
\multicolumn{13}{c}{CIFAR-100} \\
\midrule
$\text{AT}^{\dagger}$   \cite{madry2018pgd}    &59.16&62.50 \textcolor{green}{$\uparrow$}&32.95&35.30 \textcolor{green}{$\uparrow$}&28.88&31.15 \textcolor{green}{$\uparrow$}&28.87&31.05 \textcolor{green}{$\uparrow$}&28.32&30.24 \textcolor{green}{$\uparrow$}&25.16&27.04  \textcolor{green}{$\uparrow$} \\
$\text{ALP}^{\dagger}$ 	\cite{kannan2018alp}    &63.41&66.70 \textcolor{green}{$\uparrow$}&30.45&32.05 \textcolor{green}{$\uparrow$}&25.33&26.81 \textcolor{green}{$\uparrow$}&25.34&26.70 \textcolor{green}{$\uparrow$}&26.51&27.65 \textcolor{green}{$\uparrow$}&23.13&24.06  \textcolor{green}{$\uparrow$} \\
$\text{TRADES}^{\dagger}$ \cite{zhang2019trades}  &58.69&59.05 \textcolor{green}{$\uparrow$}&32.59&33.66 \textcolor{green}{$\uparrow$}&30.22&31.27 \textcolor{green}{$\uparrow$}&30.17&31.21 \textcolor{green}{$\uparrow$}&28.06&27.97 \textcolor{red}{$\downarrow$}&25.37&25.73 \textcolor{green}{$\uparrow$}  \\
MART                  \cite{wang2020mart}      &54.74&56.01 \textcolor{green}{$\uparrow$}&34.05&35.02 \textcolor{green}{$\uparrow$}&31.67&32.90 \textcolor{green}{$\uparrow$}&31.70&32.87 \textcolor{green}{$\uparrow$}&28.42&29.04 \textcolor{green}{$\uparrow$}&26.24&26.59  \textcolor{green}{$\uparrow$}  \\
FAT                   \cite{zhang2020fat}      &62.23&64.72 \textcolor{green}{$\uparrow$}&26.75&26.86 \textcolor{green}{$\uparrow$}&20.01&20.20 \textcolor{green}{$\uparrow$}&20.04&20.11 \textcolor{green}{$\uparrow$}&23.29&23.42 \textcolor{green}{$\uparrow$}&18.89&18.65  \textcolor{red}{$\downarrow$} \\
$\text{FAT}^{\ddagger}$  \cite{zhang2020fat}   &63.69&66.40 \textcolor{green}{$\uparrow$}&28.19&29.17 \textcolor{green}{$\uparrow$}&22.01&22.68 \textcolor{green}{$\uparrow$}&21.90&22.69 \textcolor{green}{$\uparrow$}&24.48&25.17 \textcolor{green}{$\uparrow$}&20.05&20.82 \textcolor{green}{$\uparrow$} \\
AT-AWP                \cite{wu2020awp}      &58.47&61.37 \textcolor{green}{$\uparrow$}&35.37&37.22 \textcolor{green}{$\uparrow$}&32.65&34.23 \textcolor{green}{$\uparrow$}&32.60&34.13 \textcolor{green}{$\uparrow$}&30.12&31.65 \textcolor{green}{$\uparrow$}&27.47& \textbf{28.64}  \textcolor{green}{$\uparrow$} \\
TRADES-AWP            \cite{wu2020awp}      &59.53&60.81 \textcolor{green}{$\uparrow$}&34.23&35.22 \textcolor{green}{$\uparrow$}&31.96&32.59 \textcolor{green}{$\uparrow$}&31.87&32.50 \textcolor{green}{$\uparrow$}&29.34&29.55 \textcolor{green}{$\uparrow$}&26.57&26.94 \textcolor{green}{$\uparrow$} \\
DAJAT \cite{addepalli2020dajat}   &65.46& \textbf{67.65} \textcolor{green}{$\uparrow$}& 37.08 & \textbf{38.72}  \textcolor{green}{$\uparrow$}& 33.49 & \textbf{34.81} \textcolor{green}{$\uparrow$}& 35.15 & \textbf{36.46} \textcolor{green}{$\uparrow$}& 31.20 & \textbf{31.69}  \textcolor{green}{$\uparrow$}& 27.42 & 27.60  \textcolor{green}{$\uparrow$} \\

\bottomrule
\end{tabular}}
\end{table*}

\subsection{Combined with various defensive methods}
\label{section-overall}

In this section, we demonstrate that the proposed modifications are orthogonal to most of the defense mechanisms and
could help achieve further robustness improvements.
We take ResNet18 as the backbone network architecture,  with the default 1-2-2-2 stride configuration.
The modified architecture adopts the 1-1-2-2 stride configuration, thus enlarging the convolutional feature maps by a factor of 2.

\textbf{CIFAR-10/100 performance.}
Table~\ref{table-cifar-linf} reports the natural accuracy and adversarial robustness on CIFAR-10 and CIFAR-100.
According to the overall comparisons, we have the following key observations:
1) Most of the defense methods,
from the simplest AT, ALP to the somewhat complex AWP, DAJAT,
enjoy impressive improvements in both natural accuracy and adversarial robustness.
In view of the fact that there is no hyperparameter tuning for the modified ResNet18,
the proposed modifications undoubtedly contribute to performance.
2) The poor performance of FAT is due to the well-known over-fitting problem \cite{rice2020overfitting}.
Enlarging convolutional feature maps enhances the model capacity, which further exacerbates this situation.
As a result, FAT trained by the official setting fails to achieve optimal performance.
Stopping the training process early makes the modifications work again (see the results of $\text{FAT}^{\ddagger}$).

More interesting findings can be observed if we take a closer look at the results evaluated by the most aggressive attack, AutoAttack.
1) AT-AWP is an advanced version of AT, and the tricks therein result in a 0.80\% increase in adversarial robustness on CIFAR-10 but a 1.83\% decrease in natural accuracy.
This is an unsatisfactory trade-off encountered by most defense mechanisms.
In contrast, the modified ResNet18 can bring consistent improvements in both natural accuracy (2.02\% for AT) and adversarial robustness (1.11\% for AT).
This is due to the rationality of the proposed modifications, which can simultaneously enhance the model capacity and anti-perturbation ability.
2)
The state-of-the-art defense mechanisms can also benefit from the modifications.
For example, TRADES-AWP achieves a robustness gain of 1.66\% on CIFAR-10, and AT-AWP achieves a gain of 1.17\% on CIFAR-100.
It suggests that improving existing neural networks is a promising direction for significant and consistent robustness gains.

\textbf{Statistical significance.}
To verify the statistical significance,
we perform the t-test using the results of five independent runs for each baseline defensive method on CIFAR-10.
As can be seen from Figure~\ref{fig-significance},
except the adversarial robustness of $\text{FAT}^{\ddagger}$,
the  improvements on all baselines are statistically significant under $p$-value $<5\%$.

\begin{figure}[t]
	\centering
	\scalebox{.55}{\includegraphics{./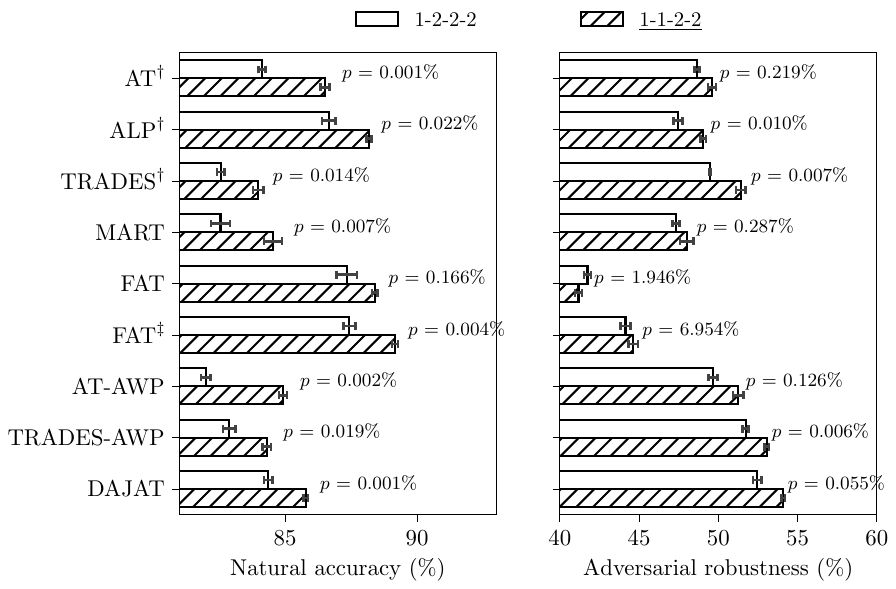}}
	\caption{
		Statistical significance of
		natural accuracy (\%) and adversarial robustness (\%) under AutoAttack.
		A paired t-test over 5 independent experiments is performed for each defense mechanism.
		\underline{Underline} indicates the improved ResNet18.
	}
	\label{fig-significance}
\end{figure}

\begin{figure*}[h]
	\centering
	\scalebox{.7}{\includegraphics{./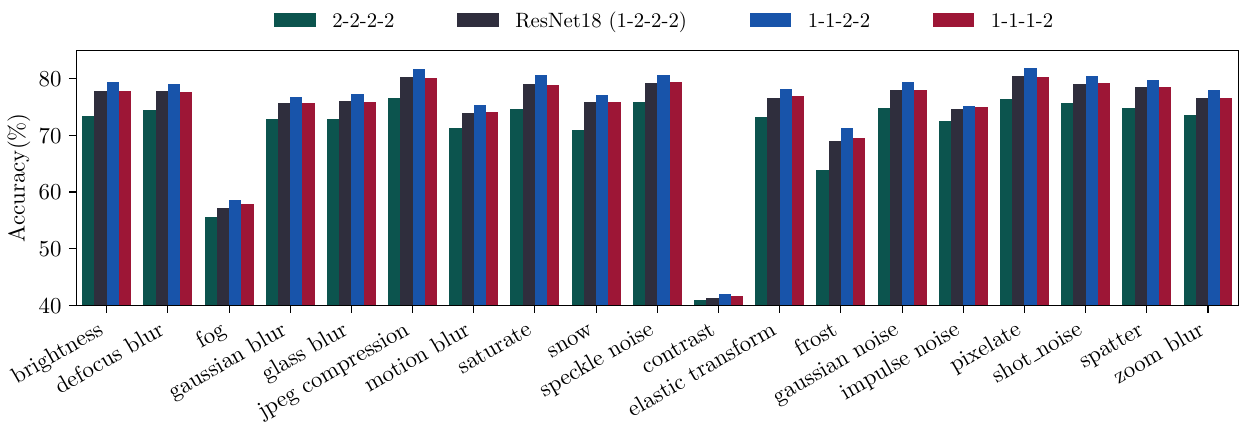}}
	\caption{
		Classification accuracy (\%) on CIFAR-10-C \cite{hendrycks2019cifarc}.
		It measures the general robustness against 19 different types of semantically invariant corruptions.
	}
	\label{fig-corruptions}
\end{figure*}

\textbf{CIFAR-10-C.}
We further use the CIFAR-10-C dataset \cite{hendrycks2019cifarc} to evaluate the general robustness against common corruptions.
It can be observed from Figure \ref{fig-corruptions} that due to the larger feature maps,
the proposed modification also improves the resistance to unseen semantically invariant corruptions.

\subsection{Gradient obfuscation concerns}
\label{section-gradient-obfuscation}

Layer transformation may raise concerns about gradient obfuscation \cite{athalye2018obfuscation}.
We show that this problem does not exist with the proposed modifications.
It can be first verified by Figure~\ref{fig-ablation-epsilon},
in which the robustness of the modified ResNet18 can be successfully dropped to zero as the budget increases.

\begin{figure}[hbt]
	\centering
	\scalebox{.8}{\includegraphics{./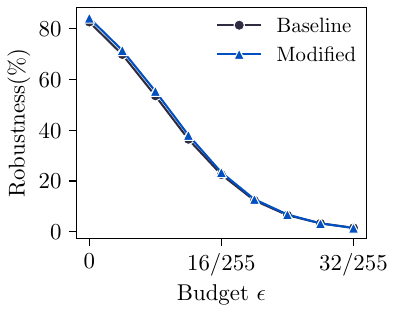}}
	\caption{
		Adversarial robustness comparison as the perturbation budget $\epsilon$ increases.
		The robustness of both the baseline and the improved variant gradually decreases to zero.
	}
	\label{fig-ablation-epsilon}
\end{figure}

In addition, we also design an adaptive attack (named LayerAttack) to craft the perturbations such that:
\begin{align*}
	\delta = \mathop{\text{argmax}} \limits_{\|\delta\|_{\infty} \le \epsilon} \: \|\Delta\|_2^2,
\end{align*}
or
\begin{align*}
	\delta = \mathop{\text{argmax}} \limits_{\|\delta\|_{\infty} \le \epsilon} \: \|\mathcal{P}(\Delta)\|_2^2.
\end{align*}
The corresponding results within the perturbation budget $\epsilon = 8 / 255$ are presented in Table \ref{table-layerattack}.
Of course, using $\|\Delta\|_2^2$ or $\|\mathcal{P}(\Delta)\|_2^2$ as a regularization term for the cross-entropy loss can encourage the perturbations to deceive the model;
however, we find that the optimal weight of this regularization term is close to zero, in which case it actually degrades to a normal PGD attack.
This can be explained by the fact that the proposed improvements do not suffer from gradient confusion.

\begin{table}[tb]
	\caption{
		Adversarial robustness evaluated by LayerAttack based on feature maps $\Delta$ and pooled feature maps $\mathcal{P}(\Delta)$.
	}
	\label{table-layerattack}
	\centering
	\scalebox{.9}{
		\begin{tabular}{cccccc}
			\toprule
			& & \multicolumn{2}{c}{LayerAttack-20}	&  \multicolumn{2}{c}{LayerAttack-50}  \\
			\cmidrule{3-6}
			Configuration & Natural  & $\Delta$	&  $\mathcal{P}(\Delta)$ & $\Delta$	&  $\mathcal{P}(\Delta)$ \\
			\midrule
			2-2-2-2 & 78.53 & 68.32 & 71.32 & 67.90 & 70.86 \\
			ResNet18 (1-2-2-2) & 82.56 & 73.24 & 74.93 & 72.89 & 74.46 \\
			1-1-2-2 & \textbf{84.13} & 76.15 & \textbf{77.34} & 75.71 & \textbf{77.12} \\
			1-1-1-2 & 82.65 & \textbf{76.86} & 76.44 & \textbf{76.66} & 76.16 \\
			\bottomrule
	\end{tabular}}
\end{table}

\section{Conclusion}

In this work, we theoretically analyze the relationship between the convolutional features and the perturbation resistance of the network.
It is discovered that averaging pooling in conjunction with large feature maps is capable of boosting anti-perturbation ability,
but the opposite is true for max pooling.
Based on the theoretical findings,
we propose two approaches to enlarge convolutional feature maps,
which greatly enhance the performance of existing CNNs,
even those using max pooling by default.
Our research helps expand the understanding of network robustness
and brings new inspiration to the design of robust deep learning models.

However, enlarging feature maps inevitably cause the degradation of the receptive field, which prevents the model from capturing global information timely.
More profound work needs to be developed from the perspective of convolutional feature maps and receptive fields.

\bibliographystyle{IEEEtran}
\bibliography{refs.bib}

\clearpage

\appendix
\renewcommand{\appendixname}{Appendix}
\setcounter{prop}{0}
\setcounter{coro}{0}

\section{Experiment Details}
\label{appendix-experimental-details}

\subsection{Modified architectures of AlexNet and VGG}
\label{section-mod-arch}

\begin{figure}
	\centering
	\subfloat[AlexNet]{\includegraphics[width=0.45\textwidth]{./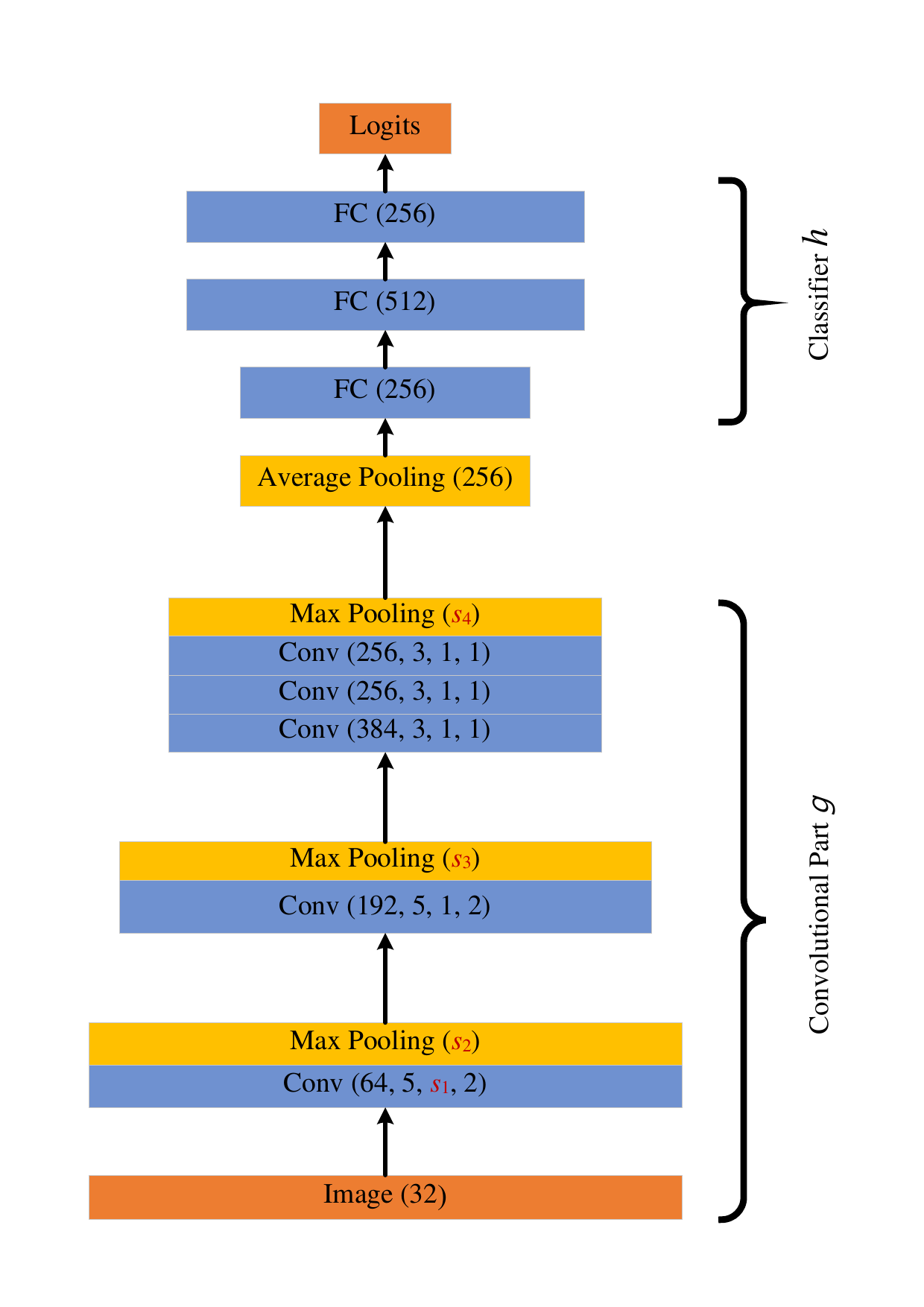}} \\
	\subfloat[VGG16]{\includegraphics[width=0.45\textwidth]{./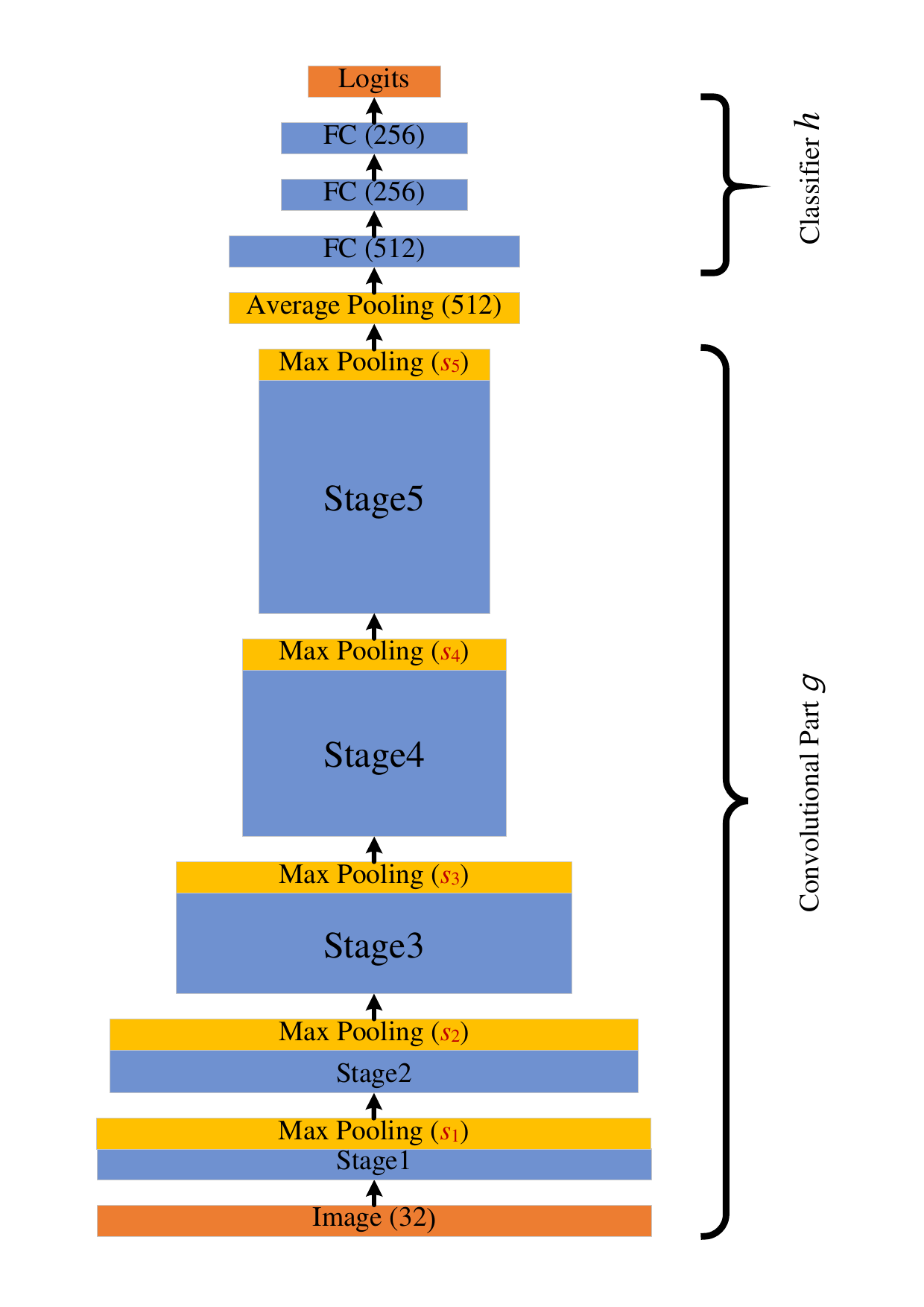}}
	\caption{
		The frameworks of AlexNet and VGG16.
		(a) AlexNet with the \textcolor{red}{$s_1$-$s_2$-$s_3$-$s_4$} sliding stride configuration.
		The quadruple notation therein represents channels, kernel size, stride and padding, respectively;
		(b) VGG16 with the \textcolor{red}{$s_1$-$s_2$-$s_3$-$s_4$-$s_5$} sliding stride configuration.
	}
	\label{appendix-fig-architectural-details}
\end{figure}

We provide the architectural details of AlexNet \cite{krizhevsky2012alex} and VGG \cite{simonyan2015vgg} in Figure \ref{appendix-fig-architectural-details},
where the adpative average pooling is performed after the convolutional part.

\subsection{Ablation study on filter size}

\begin{table}[htb]
	\caption{
		The impact of convolutional filter sizes on the effectiveness of the proposed modifications.
	}
	\label{table-filter-size}
	\centering
	\scalebox{.9}{
		\begin{tabular}{c|c|ccc}
			\toprule
			Filter Size & Configuration & Natural	&  PGD-20 \\
			\midrule
			\multirow{2}{*}{2} & 1-2-2-2 & 82.17 & 53.59 \\
			   &\underline{1-1-2-2}& 83.79 & 54.83 \\
			\midrule
			\multirow{2}{*}{3} & 1-2-2-2 & 82.56 & 53.50 \\
			   &\underline{1-1-2-2}& \textbf{84.13} & 55.37 \\
			\midrule
			\multirow{2}{*}{4} & 1-2-2-2 & 82.10 & 52.97 \\
			   &\underline{1-1-2-2}& 83.92 & \textbf{55.44} \\
			\bottomrule
	\end{tabular}}
\end{table}

Table \ref{table-filter-size} reports an ablation study on the convolutional filter size.
It can be seen that the proposed modifications are efficient across different filter sizes.

\subsection{High-resolution dataset}
\label{section-high-resolution}

\begin{table}[htb]
	\caption{
		Natural Accuracy (\%) and Adversarial Robustness (\%) under PGD-20 attack on ImageNette.		
	}
	\label{table-imagenette}
	\centering
	\scalebox{.9}{
		\begin{tabular}{c|c|c|ccc}
			\toprule
			& Image Size & Configuration  & Natural	&  PGD-20 \\
			\midrule
			\multirow{3}{*}{ResNet18} & \multirow{3}{*}{128} & 2-2-2-2 & 87.47 & 47.13  \\
			   & &1-2-2-2& 87.44 & 46.85 \\
			   & &2-1-2-2& \textbf{87.83} & \textbf{47.80} \\
			\midrule
			\multirow{3}{*}{ResNet18} & \multirow{3}{*}{64} & 2-2-2-2 & 84.59 & 40.00  \\
			   & &2-1-2-2& 85.66 & \textbf{42.93} \\
			   & &1-2-2-2& \textbf{85.86} & 42.27 \\
			\midrule
			\multirow{2}{*}{VGG19} & \multirow{2}{*}{128} & 2-2-2-2-2 &\textbf{79.62} & 39.19  \\
			   & &1-2-2-2-2& 79.57 & \textbf{41.66} \\
			\bottomrule
	\end{tabular}}
\end{table}

\begin{table}[htb]
	\caption{
		The impact of pooling type on the high-resolution dataset of ImageNette with the $128 \times 128$ image size.
		`-' indicates the case that cannot be trained successfully by TRADES.
	}
	\bigskip
	\label{table-imagenette-pooling}
	\centering
	\scalebox{.9}{
		\begin{tabular}{c|c|c|ccc}
			\toprule
			& Pooling Type & Configuration  & Natural	&  PGD-20 \\
			\midrule
			\multirow{4}{*}{ResNet18} & Average & 2-2-2-2 & \textbf{87.47} & \textbf{47.13}  \\
				& Max & 2-2-2-2 & 84.05	& 41.15 \\
			   & Average &1-2-2-2& 87.44 & 46.85 \\
				& Max & 1-2-2-2 & -	& - \\
			\midrule
			\multirow{4}{*}{VGG19} & Average & 2-2-2-2-2 & \textbf{79.62} & 39.19  \\
				& Max & 2-2-2-2-2 & 71.46 &  32.69    \\
			   & Average &1-2-2-2-2& 79.57 & \textbf{41.66} \\
				& Max & 1-2-2-2-2 & - &  -   \\
			\bottomrule
	\end{tabular}}
\end{table}

ImageNette-10\footnote{\url{https://github.com/fastai/imagenette}} dataset, comprising of 10 classes of $128 \times 128$ images, is used to investigate whether the proposed modifications could be extended to high-resolution datasets.

From the experimental results in Tables \ref{table-imagenette} and \ref{table-imagenette-pooling}, we have the following observations:
\begin{itemize}
\item
Although the 1-2-2-2 and 2-1-2-2 configurations correspond to the same feature map size, the latter is more effective on high-resolution data.
This is due to earlier downsampling.
The configuration of 2-1-2-2 can thus have a better receptive field than 1-2-2-2,
which is critical for the learning of high-resolution data.
In fact, the improvements of the 1-2-2-2 configuration can be observed again when the image resolution is reduced to $64 \times 64 $.
\item
Compared to ResNet18, VGG19 which has more downsampling operations followed by a chain of fully connected layers,
thus can alleviate the above problem and benefits from the modification.
\item
While the experimental results are not as impressive as for low-resolution dataset,
we can see from Table \ref{table-imagenette-pooling} that the conclusion that average pooling has better robustness than maximum pooling  still holds.
\end{itemize}
Overall, enlarging feature maps for high-resolution datasets should be performed with more care and with full consideration of receptive field degradation.
Deciding an optimal stride configuration in concert with a sufficient receptive field is an interesting problem that deserves further research.

\end{document}